\def\eg{\textit{e.g.}}
\newtheorem{prop}{Proposition}
  \providecommand\BibTeX{{%
    \normalfont B\kern-0.5em{\scshape i\kern-0.25em b}\kern-0.8em\TeX}}}
\begin{document}

\title{Adversarial Deep Learning for Online Resource Allocation}

\author{Bingqian Du}
\email{bqdu@hku.hk}
\affiliation{%
  \institution{The University of Hong Kong}
  \streetaddress{Pok Fu Lam}
  \country{Hong Kong}
}

\author{Zhiyi Huang}
\email{zhiyi@cs.hku.hk}
\affiliation{%
  \institution{The University of Hong Kong}
  \streetaddress{Pok Fu Lam}
  \country{Hong Kong}
}

\author{Chuan Wu}
\email{cwu@cs.hku.hk}
\affiliation{%
  \institution{The University of Hong Kong}
  \streetaddress{Pok Fu Lam}
  \country{Hong Kong}
}


\begin{abstract}
  Online algorithm is an important branch in algorithm design. Designing 
  online algorithms with a bounded competitive ratio (in terms of worst-case performance) can be hard and usually relies on problem-specific assumptions. Inspired by adversarial training from Generative Adversarial Net (GAN) and the fact that competitive ratio of an online algorithm is based on worst-case input, we adopt deep neural networks to learn an online algorithm for a resource allocation and pricing problem from scratch, with the goal that the performance gap between offline optimum and the learned online algorithm can be minimized for worst-case input. 

Specifically, we leverage two neural networks as algorithm and adversary respectively and let them play a zero sum game, with the adversary being responsible for generating worst-case input while the algorithm learns the best strategy based on the input provided by the adversary. To ensure better convergence of the algorithm network (to the desired online algorithm), we propose a novel per-round update method to handle sequential decision making to break complex dependency among different rounds so that update can be done for every possible action, instead of only sampled actions. To the best of our knowledge, our work is the first using deep neural networks to design an online algorithm from the perspective of worst-case performance guarantee. Empirical studies show that our updating methods ensure convergence to Nash equilibrium and the learned algorithm outperforms state-of-the-art online algorithms under various settings. 
\end{abstract}

\begin{CCSXML}
<ccs2012>
   <concept>
       <concept_id>10010147.10010178.10010199.10010201</concept_id>
       <concept_desc>Computing methodologies~Planning under uncertainty</concept_desc>
       <concept_significance>500</concept_significance>
       </concept>
   <concept>
       <concept_id>10010147.10010178</concept_id>
       <concept_desc>Computing methodologies~Artificial intelligence</concept_desc>
       <concept_significance>500</concept_significance>
       </concept>
 </ccs2012>
\end{CCSXML}

\ccsdesc[500]{Computing methodologies~Planning under uncertainty}
\ccsdesc[500]{Computing methodologies~Artificial intelligence}

\keywords{neural networks, adversarial learning, online algorithm}

\maketitle

\section{Introduction}
\label{sec:introduction}

Traditional algorithm design assumes that input will be revealed to the algorithm all at once. However, a great number of problems arising from reality do not fall into this category. Consider the resource allocation problem in a cloud computing platform. Users' requests for renting resources can arrive at any time; the platform needs to decide whether to rent resources out to the current user without the knowledge of future requests.   
Such a problem depicts an online setting, where decisions are made for partial inputs that have been revealed so far. Decisions are hard to make in such online settings because of future uncertainty. Still considering the cloud resource allocation problem, renting resource to the current user can be beneficial but 
may also take up resources which could be allocated to later users with much higher budgets. To deal with online problems, online algorithms have been studied for decades. One common metric to evaluate an online algorithm is the competitive ratio, which measures the gap between the performance of the offline optimum and the online algorithm in the worst case (i.e., under the adversary input) 
 \cite{sleator1985amortized}. 

Inspecting an online algorithm from the competitive ratio perspective, we can view the relationship between the online algorithm and the adversary input as two players at the Nash equilibrium of a zero sum game. For the ``algorithm'' player, it aims to minimize the gap between offline optimum and its own performance to achieve a good competitive ratio; for the ``adversary'' player, it targets maximizing this gap by generating hard cases for the algorithm to handle. When both of them arrive at the Nash equilibrium, any strategy change of the adversary will not cause worse performance of the algorithm than its performance at the Nash equilibrium (which corresponds exactly to the worst case in online algorithm analysis). The policy of the algorithm network at Nash equilibrium is thus the online algorithm with 
worst-case performance guarantee.

Generative Adversarial Net (GAN) \cite{goodfellow2014generative} has been a remarkable attempt to combine game theory and deep learning. In GAN architecture, there are two neural networks: a {\em generative} neural network $\mathit{G}$ and a {\em discriminative} neural network $\mathit{D}$. $\mathit{G}$ learns to map latent variables to data distribution $\digamma$ while $\mathit{D}$ tries to distinguish real data from $\digamma$ and data generated from $\mathit{G}$. $\mathit{G}$ and $\mathit{D}$ have opposite goals; the training process of $\mathit{G}$ and $\mathit{D}$ is equivalent to having $\mathit{G}$ and $\mathit{D}$ play a two-player zero-sum game. GAN has been proved effective in generating data distributions similar to real data in a number of successful applications to computer vision \cite{brock2018large}\cite{nie2017medical} and natural language processing \cite{zhang2016generating} \cite{yu2017seqgan}. 

Inspired by competitive ratio analysis of online algorithm and the GAN model, we investigate designing online algorithms using a deep learning method, instead of standard theoretical frameworks such as primal-dual \cite{buchbinder2009design} \cite{buchbinder2007online}. In this paper, we focus on online resource allocation and pricing for social welfare maximization, a classic category of online problems. Representing the online algorithm and the adversary input generation as two neural networks, we formulate their interaction as dynamics of two players in a zero sum game: the adversary generates worst-case input while the algorithm learns the best strategy based on the input provided by the adversary. 
The goal of the algorithm 
is to minimize the difference between offline optimal social welfare and the social welfare obtained by the online algorithm; 
the adversary 
maximizes this difference so that the worst case is ensured. Traditional online algorithm design typically considers the performance ratio between offline optimal solution and online solution; 
we use performance difference as the objective to ease problem formulation and update design, which also 
reflects the performance gap between offline optimal solution and online solution, and has been used in online algorithm literature before \cite{jansen2011approximation}. Since an online algorithm produces sequential decisions, the original neural network architectures and update methods of GAN can not be applied to our problem. The standard method for dealing with sequential decision making in deep learning literature is Reinforcement Learning (RL). We do not utilize RL as most of the existing works do since RL is known to be heavily relying on exploration-exploitation trade-off and can easily be trapped in sub-optimal solutions. In order to deal with sequences and achieve better convergence (to a good online algorithm), we carefully design a per-round update method for both algorithm and adversary neural networks from their respective optimization formulations, to break the strong correlation between different time steps in a sequence, so that no more exploration-exploitation heuristics are needed and update can be done for every possible action instead of just sampled actions during the training process.  
To the best of our knowledge, our work is the first to design worst case-based online algorithms using a deep learning approach. 

We carefully analyze the Nash equilibrium achievable by our approach. We also carry out careful empirical studies under different numbers of arriving users and resource units, and different user budget distributions. The results show the convergence to Nash equilibrium with our update methods and the superior performance of our learned algorithm compared with existing state-of-the-art online resource allocation and pricing algorithms.  In this paper, we explore the possibility of designing a worst case-based algorithm using a deep learning framework. We are aware that existing deep learning-based methods cannot provide formal theoretical guarantee and worst cases are rare in practice. However, the learned online algorithm and worst cases can provide insights to algorithm designer for better understanding the problem 
when worst cases are not obvious. We verify the effectiveness of our learning-based approach through empirical studies under both worst cases and random (common) cases.

\section{Related Work}
\label{sec:relatedwork}

\subsection{Deep learning for game playing}

Game playing is a classic area investigated by researchers for decades. With the prevalence of deep learning, it is natural to ask whether deep learning can be applied to solving complicated games. There have been several attempts in this regard. 
AlphaGo \cite{silver2017mastering} masters the game of Go without relying on any human knowledge; it combines Monte Carlo Tree Search and RL to improve its policy quality and policy evaluation accuracy with continuous self play. Green Security Game has been studied by Wang {\em et al.}~\cite{wang2019deep}, by training a Deep Q-Network (a kind of RL model) to learn an approximate best response. Yu {\em et al.}~\cite{yu2017seqgan} apply a GAN model for text generation; policy gradient and REINFORCE algorithm are used for the training of their generator, while the same training method of 
discriminator as in the original GAN paper \cite{goodfellow2014generative} is used in their discriminator training. Multi-agent deep reinforcement learning is utilised by Celli {\em et al.} \cite{celli2019coordination} to solve a sequential zero-sum game. Solving zero-sum game in linear discrete-time system is investigated by Luo {\em et al.} \cite{luo2020policy}. They develop a data-based policy iteration Q-learning algorithm to learn the optimal Q-function from data collected in real systems. A regret-based reinforcement learning algorithm is proposed by  Steinberger {\em et al.} \cite{steinberger2020dream} for imperfect-information multi-agent model-free setting in order to find the equilibrium. 
We can see that existing deep learning methods for 
game playing are mostly based on RL, but as pointed out in \cite{arulkumaran2017brief}, the main challenge faced by RL is that long-range time dependencies make the consequences of a single action unclear after many transitions of the environment, and its observations are limited by its sampled actions.  

We use a GAN-like framework; however, unlike GAN's one-shot output, our model deals with sequential decision making, which makes the training even more challenging. Based on the drawbacks of RL described before, we do not utilize the RL method as most of the existing studies do; instead, we design a novel per-round update method to tackle challenges in training of both algorithm and adversary neural networks, to ensure better convergence.

\subsection{Online resource allocation problems}

Online allocation and pricing for single-type, non-recycled resources is a fundamental online problem. It is equivalent to the classic online knapsack problem when pricing is not considered \cite{kellerer2004multidimensional}. The online knapsack problem was first studied by Marchetti-Spaccamela {\em et al.}~\cite{10.1007/BF01585758}, 
and they considered the average case analysis. Following that, 
Buchbinder {\em et al.}~\cite{buchbinder2009online} propose a general framework for design and analysis of online algorithms for packing problems (the knapsack problem falls into this category) based on two assumptions: (1) The budget per unit of resource of all users is lower bounded by $L$ and upper bounded by $U$; (2) The resource demand of a single user is infinitesimal compared to total resource number. Their algorithm gives an $\mathcal{O}(\log (U/L))$ competitive ratio with fractional solution; however, how to round the solution to integers is not clear. Based on the same assumptions, Zhou {\em et al.}~\cite{zhou2008budget} cast the online single ad slot auction problem as an online knapsack problem and propose a KP-Threshold algorithm which achieves a competitive ratio of $\ln (U/L) +1$. 

There have been recent studies on online resource allocation and pricing algorithm design. 
Online posted price and resource allocation under the same setting as ours was studied by Zhang {\em et al.}~\cite{zhang2017optimal}. Their deterministic online posted pricing and resource allocation algorithm is proved to be optimal under the same two assumptions as above. 
Unlike \cite{zhang2017optimal}, we do not make assumptions with respect to the relationship between resource demand and total resource supply and approach the online problem using a deep learning method. Besides, we provide a randomized algorithm, which can potentially capture more complicated features of the problem for better decision making than deterministic ones. 

 Some deep learning methods have been applied to solving online resource allocation and pricing problems. Tesauro {\em et al.}~\cite{tesauro2005online} use a decompositional RL method to learn a strategy for online server allocation. They empirically show that a RL-based method is comparable to a performance model framework based on queuing theory. Resource management in an online scenario is investigated by Mao {\em et al.} \cite{mao2016resource}. They translate the packing problem with resource constraints to a learning problem and propose to use RL for the learning process. Wang {\em et al.} \cite{wang2017automated} aim at achieving automated balance of performance and cost for cloud provisioning. By analysing the performance of RL under tabular, deep,
and dueling double deep Q-learning with the CloudSim simulator, they show the effectiveness of RL.  Du {\em et al.}~\cite{du2019learning} use a Long Short Term Memory (LSTM) neural network and a deep deterministic RL algorithm to learn resource allocation and pricing strategy for cloud computing platform, based on the assumption that user request sequences follow a fixed time series distribution. Existing learning-based methods typically make assumptions on the distribution of user requests, which contradicts worst-case input in competitive analysis of online algorithms. We approach online resource allocation and pricing using a deep learning method without making any assumption regarding user request sequence distribution, and worst-case performance is considered in this paper.

\subsection{Deep learning for classic theoretical problems}
Deep learning has been a new trend for solving classic optimization problems and has shown superior performance in computing time and solution quality. Dai {\em et al.}~\cite{khalil2017learning} propose a combination of reinforcement learning and graph embedding method to design algorithms for NP-hard combinatorial optimization problems. Empirical results show the effectiveness of their learning-based method. Travelling Salesman Problem is investigated via the neural approach by Bello {\em et al.} \cite{bello2016neural}. They utilize recurrent neural network and policy gradient method to tackle such a learning problem and achieve close-to-optimal
results on Euclidean graphs with up to 100 nodes.  Kool {\em et al.} \cite{kool2018attention} propose new models and training methods for routing problems, such as Travelling Salesman
Problem (TSP), Vehicle Routing Problem (VRP), etc. They design model-based attention layers to better capture the underlying problem structure and a greedy rollout-based REINFORCE algorithm to find optimal solutions. Evaluation shows their method outperforms a large number of
baselines and obtains results close to highly optimized and specialized algorithms. Graph-based combinatorial optimization problems are further studied by Li {\em et al.} \cite{li2018combinatorial} via a graph
convolutional network (GCN). The trained GCN is used to guide a tree search to generate a large number of good candidate solutions. Evaluation shows that their method performs on par with highly optimized state-of-the-art heuristic solvers. Unsupervised learning methods are considered by Karalias {\em et al.} \cite{karalias2020erdos}. By carefully designing the loss, they bridge the gap between the discrete feasible solution for the combinatorial optimization problem and the continuous output of the NN, which provides performance guarantee for the solution found by the NN. Empirical results show that their method outperforms heuristics and solvers in both solution quality and computation time.

Existing literature has shown the ability of neural approaches in solving classic optimization problems, mainly due to the powerful computation and the generalization ability of NN. By carefully designing the training method and the model, NN is a promising way to find high quality solution for hard optimization problems. Based on this observation, we believe the neural approach can be a novel tool for online algorithm design, with the goal of finding better solutions. To the best of our knowledge, our work is the first to tackle online optimization through a neural network based method.

\subsection{Online algorithm with advice}
Online algorithms usually assume no knowledge about future input, while with accumulated historical traces, it is likely that some information about the input can be incorporated to improve the performance of online algorithms. Medina {\em et al.}~\cite{medina2017revenue} utilise a predictor for reserve price optimization, which is the first work to relate the revenue gain with the quality of a machine learning predictor. Following this work, Lykouris {\em et al.}~\cite{lykouris2018competitive} study the caching problem with the augmentation of a machine learning oracle and prove an improved bound when the oracle has a low error rate, as compared with unconditional worst case. They propose {\em Robustness} and {\em Consistency} to evaluate the model performance with respect to the worst case and best prediction, respectively. Kumar {\em et al.}~\cite{kumar2018improving} improve the ski-rental algorithm and non-clairvoyant job scheduling using machine learning predictions. Their proposed algorithm achieves good trade-off between robustness and consistency, improves the performance with better predictions, and does not degrade much when predictions are worse. Gollapudi {\em et al.}~\cite{gollapudi2019online} also study the ski-rental problem. The main difference 
is that they consider multiple machine learning experts. 
They prove the improvement of their algorithm with the aid of prediction.  

Another line of works consider incorporating historical data traces directly to improve the performance of online algorithms. Prodan {\em et al.}~\cite{prodan2009prediction} propose a prediction-based method for resource provisioning, with predictions produced by a neural net trained using historical data traces. On top of the NN, they design generic analytical game load models for resource allocation. Vera {\em et al.}~\cite{vera2021online} propose a framework based on Bellman inequalities for designing online allocation and pricing algorithms. 
Following this work, Banerjee {\em et al.}~\cite{banerjee2020constant} utilise approximate dynamic programming based on Bellman inequalities,   
which resolve an offline relaxations to make the controller reduce its sensitivity to estimation error. 

Our work differs from these studies. In our work, we do not consider incorporating an online algorithm with machine learning prediction of input based on historical traces. Instead, we directly use the machine learning approach to learn the online algorithm and worst cases, still respecting the assumption that no information about the input is known to the online algorithm beforehand.

\subsection{Convergence rate analysis for differentiable games} There have been some recent efforts to study the convergence behaviour of GAN and other learning based games. Singh {\em et al.} \cite{singh2013nash} analyse the two-player two-action iterative general-sum game, where each agent updates its strategy through gradient descent. They prove that the strategies may not always converge, but the average payoffs always converge to the expected payoffs of NE. Nagarajan {\em et al.} \cite{nagarajan2017gradient} study the convergence dynamics of GAN by showing that the equilibrium point is locally asymptotically stable for GAN formulation. Based on this result, they further propose a regularization term to speed up convergence. Letcher {\em et al.} \cite{letcher2019differentiable} argue that gradient descent does not always converge to the local optimum of objective in GANs, when there are multiple interacting losses. By decomposing the game Jacobian, they propose Symplectic Gradient Adjustment to find stable fixed point. The non-asymptotic local convergence of two-player smooth game was studied by Liang {\em et al.} \cite{liang2019interaction}. They prove that the iteration needed for the Simultaneous Gradient Ascent to converge is dependent on the off-diagonal interaction term.

\section{Model And Training Methods}
\label{sec:model}
An overview of our method is given in Fig.~\ref{fig:arch}. We adopt two neural networks (NN), namely the {\em algorithm} NN and the {\em adversary} NN. The output (colored circle) of the algorithm NN is a probability distribution over the price set for each user, while the output from the adversary NN is probability distributions over the budget set for all users. The input (colored square) to the algorithm NN or the adversary NN is decided by the learned strategy of the other NN. To be more specific, in each training iteration, the price sequence as the input to the adversary NN and the budget sequence fed into the algorithm NN are sampled according to the output of the algorithm NN and the adversary NN, respectively. The price set A and the budget set B are assumed to be known and finite in our formulation, which can be estimated based on historical traces if not given. 

We now formulate our problem, model the algorithm and adversary neural networks, and derive the NN update methods for learning the online algorithm. Importance notations are summarized in Table ~\ref{notation_tab}. 
\begin{figure*}
    \centering
    \includegraphics[width = 0.8\textwidth]{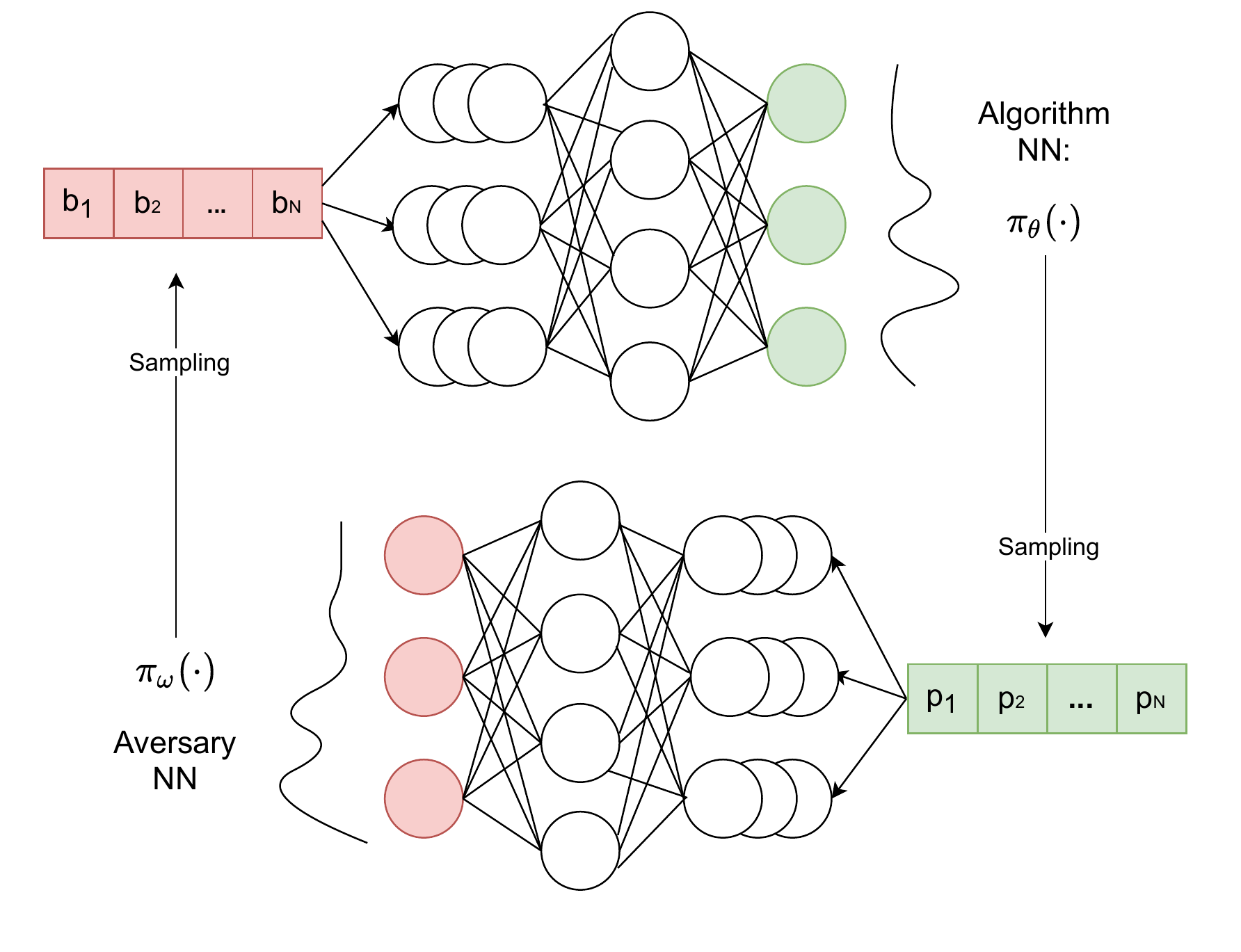}
    \caption{System Overview}
    \label{fig:arch}
\end{figure*}

\begin{table}[h]
    \centering
    \begin{tabular}{|c|c|c|c|}
    \hline
         $i$& user index & N &  $\#$ of users \\
         \hline
         $R$ & $\#$ of resource units & $b_i$ & budget of user $i$ \\
         \hline
         $B$ & budget set &$p_i$ & posted price for user $i$ \\
         \hline
         $A$ & price set & $\Omega$ & budget sequence set \\
         \hline
         $\pi_{\theta}(\cdot)$ & algorithm NN & $\pi_{\omega}(\cdot)$ & adversary NN \\
         \hline
         $\tau_{o}$ & joint mixed strategy  & $C$ & payoff matrix of the game \\
         \hline
         \multicolumn{1}{|c|}{$\tau_{p}$ } & \multicolumn{3}{c|}{ mixed strategy of the algorithm } \\
         \hline
         \multicolumn{1}{|c|}{$\tau_{b}$  } & \multicolumn{3}{c|}{ mixed strategy of the adversary  } \\
         \hline
         \multicolumn{1}{|c|}{$y_i$ } & \multicolumn{3}{c|}{ $\#$ of available resources when user $i$ arrives } \\
         \hline
          \multicolumn{1}{|c|}{$X_i$ } & \multicolumn{3}{c|}{ random variable denoting user $i$ is accepted or not }\\
          \hline
          \multicolumn{1}{|c|}{$x_i$ } & \multicolumn{3}{c|}{ realized acceptance decision for user $i$} \\
          \hline
          \multicolumn{1}{|c|}{$l_i$ } & \multicolumn{3}{c|}{ whether budget $i$ will be accepted in offline optimum} \\
          \hline
          \multicolumn{1}{|c|}{$Gap$ } & \multicolumn{3}{c|}{ the difference between offline optimum and online solution} \\
          \hline
          \multicolumn{1}{|c|}{$u_p(\cdot)$ } & \multicolumn{3}{c|}{ utility of algorithm: additive inverse of the expected gap} \\
          \hline
          \multicolumn{1}{|c|}{$u_b(\cdot)$ } & \multicolumn{3}{c|}{ utility of adversary: the expected gap} \\
          \hline
    \end{tabular}
    \caption{Notation Table}
    \label{notation_tab}
\end{table}
\subsection{Online Resource Allocation and Pricing Problem}

We consider social welfare maximization for single-type non-recycled resource allocation and pricing. There are $R$ units of resource supply in total. $N$ users arrive over time, each requesting one unit of the resource. The budget of user $i \in N$ is $b_i$, denoting how much the user is willing to pay for buying one unit of the resource. When user $i$ arrives, the online algorithm posts its price $p_i$ for one unit of the resource. We assume the algorithm is not aware of the current user's budget $b_i$, but knows posted prices and budgets of past users. If $p_i$ is no larger than $b_i$, user $i$ accepts the price and receives one unit of the resource. The portion of social welfare due to accepting user $i$ is $(b_i-p_i)+p_i=b_i$.

\subsection{Algorithm}

In the posted price scenario, 
 the accept or reject decision would normally be made by a user by comparing its budget to the current price. However, when formulating the optimization problem for the algorithm NN, 
 to allow the NN to adjust its pricing strategy, we instead consider the probability of a user being accepted or rejected by the NN as the decision variable. For the online algorithm, upon arrival of a user, the decision variable is a probability distribution over all possible prices. We use $X_i$ as the random variable denoting whether user $i$ is accepted or not.  $x_i$ is the realized acceptance decision of random variable $X_i$
: $x_i = 1$ if $b_i \ge p_i$ and there exists available resource, and $x_i = 0$, otherwise. $x_i = 1$ implies that one unit of resource is allocated to user $i$ while $x_i =0$ indicates that user $i$ does not consume any resource. Let $P(X_i=1|X_1 =x_1,..,X_j=x_j, 1 \le j <i)$ denote the probability of user $i$ being accepted conditioned on the realised acceptance of previous users, which is also the probability for choosing prices no larger than $b_i$. Similarly, $P(X_i=0|X_1 = x_1,..,X_j=x_j, 1 \le j < i)$ is the probability of not allocating resource to user $i$ conditioned on the realised decisions of previous users, {\em i.e.}, the probability for choosing prices larger than $b_i$. For simplicity of notation, we use $P(X_i=1|x_1..x_{j})$ and $P(X_i=0|x_1..x_{j})$ as a shorthand for $P(X_i=1|X_1 =x_1,..,X_j=x_j, 1 \le j <i)$  and $P(X_i=0|X_1 = x_1,..,X_j=x_j, 1 \le j < i)$, respectively. 

The goal of the online algorithm is to minimize the gap between offline optimal social welfare and the social welfare achieved by the algorithm, which is equivalent to maximizing the social welfare of the algorithm solely because the offline optimum is a constant for a given user budget sequence. 
The social welfare maximization problem to solve upon arrival of user $i$ can be formulated as follows (it is the offline optimization problem if user $i$ is the first user). Note that since rejecting one user will bring 0 social welfare increment and will not affect resource consumption, the terms related to $P(X_i=0|x_1..x_{j})$ are not shown in the following formulation and their gradient 
will be 0.  
\vspace{-10mm}

\begin{equation}\label{obj:maximization_conditioned_welfare}
\max_{\substack{\mathbf{P(X_j = 1|x_1..x_{i-1}),} \\ \forall j \in [i,N]}} f = \sum_{j=i}^{N} b_j P(X_j=1|x_1..x_{i-1})
\end{equation}

subject to:

\vspace{-10mm}

\begin{align*}
x_1+x_2+...+x_{i-1} + \sum_{j=i}^{N}P(X_j=1|x_1..x_{i-1}) \le R \quad\quad (\ref{obj:maximization_conditioned_welfare}a)\\
P(X_j=1|x_1..x_{i-1}) \in [0, 1], \forall j \in [i,N]\quad\quad (\ref{obj:maximization_conditioned_welfare}b)
\end{align*}

The objective function (\ref{obj:maximization_conditioned_welfare}) is the expected social welfare achieved by the algorithm conditioned on the decisions before user $i$.  (\ref{obj:maximization_conditioned_welfare}a) is the resource constraint, which bounds the expected resource consumption by the number of available resources, due to the randomization nature of our algorithm NN. The conditional probability based on previous realized decisions would ensure the optimality of the complete sequence, that users with the largest R budgets would be accepted. (\ref{obj:maximization_conditioned_welfare}b) presents the decision variables in the problem. 

The above problem is a linear program, where strong duality holds. We can relax constraint (\ref{obj:maximization_conditioned_welfare}a) by introducing Lagrangian multiplier $\lambda$, and obtain the following Lagrangian function:

\begin{equation}
\begin{split}
 & \mathcal{L}(P(X_j=1|x_1..x_{i-1}),\lambda) =   \hspace*{1mm}\sum_{j=i}^{N} b_j P(X_j=1|x_1..x_{i-1})
 \\& + \lambda (R-x_1-..-x_{i-1}-\sum_{j=i}^N P(X_j=1|x_1..x_{i-1})) \\
 &= \sum_{j=i}^{N}(b_j-\lambda)P(X_j=1|x_1..x_{i-1}) + \lambda (R-x_1-..-x_{i-1})    
\end{split}
\end{equation}

The dual function is then
\vspace{-5mm}

$$\mathcal{G}(\lambda) = \max_{\substack{\mathbf{P(X_j = 1|x_1..x_{i-1}),} \\ \forall j \in [i,N]}}\\ \mathcal{L}(P(X_j=1|x_1..x_{i-1}),\lambda)$$



Let $OPT$ be the optimal objective value of the primal problem, and $\lambda^*$ represent $(\lambda^*|x_1..x_{i-1})$.  Since strong duality holds, we have:

\begin{align*}
  OPT& = \mathcal{G}(\lambda^*) = \max_{\substack{\mathbf{P(X_j = 1|x_1..x_{i-1}),} \\ \forall j \in [i,N]}} \mathcal{L}(P(X_j=1|x_1..x_{i-1}),\lambda^*)\\
   & = \max_{\substack{\mathbf{P(X_j = 1|x_1..x_{i-1}),} \\ \forall j \in [i,N]}} \sum_{j=i}^{N} (b_i-\lambda^*) P(X_j=1|x_1..x_{i-1}) 
  \\& + \lambda^* (R-x_1-..-x_{i-1})
\end{align*}

Given $\lambda^*$, $\lambda^*(R-x_1-..-x_{i-1})$ is a constant, and solving the primal problem is equivalent to solving:

\begin{equation}\label{equi_obj}
\max_{\substack{\mathbf{P(X_j = 1|x_1..x_{i-1}),} \\ \forall j \in [i,N]}} \hspace*{2mm}\sum_{j=i}^{N} (b_i-\lambda^*) P(X_j=1|x_1..x_{i-1})
\end{equation}

subject to:

\begin{align*}
    P(X_j=1|x_1..x_{i-1}) \in [0, 1],\forall j \in [i,N].\quad\quad(\ref{equi_obj}a)
\end{align*}

The gradient of function (\ref{equi_obj}) on $P(X_j=1|x_1,..x_{i-1})$ is $b_i -(\lambda^*|x_1..x_{i-1})$. 
According to Simon {\em et al.}~\cite{simon1994mathematics}, $(\lambda^*|x_1..x_{i-1}) = \frac{\partial f(P^*(X_j=1|x_1..x_{i-1}))}{\partial R}$ measures the rate of the change of the optimal value of problem~(\ref{obj:maximization_conditioned_welfare}) with respect to resource capacity (in constraint~(\ref{obj:maximization_conditioned_welfare}a)), {\em i.e.}, how much the objective function value will increase if another unit of resource is available. We will use this gradient in the stochastic gradient descent (SGD) method in solving the optimization problem, via training an algorithm neural network. For algorithm NN training, we know complete budget sequences, as provided by the adversary neural network. Given an acceptance decision sequence $x_i,...,x_N$ and the observation that optimal Lagrangian multiplier represents the infinitesimal change in the optimal social welfare with one more unit of resource, $(\lambda^*|x_1..x_{i-1})$ can take any value between the $y^{\mbox{th}}$ budget and the $(y+1)^{\mbox{th}}$ budget in the ordered budget sequence in decreasing budget order, with $y =y_i = R - \sum_{l=1}^{i-1}x_l$ being the current amount of available resources upon arrival of user $i$. We set $(\lambda^*|x_1..x_{i-1})$ to be the average of the $y^{\mbox{th}}$ budget and the $(y+1)^{\mbox{th}}$ budget.

 We provide more discussions to better justify our update method for the algorithm NN. 
\begin{itemize}
    \item Conditional probability: the problem formulation (\ref{obj:maximization_conditioned_welfare}) for the algorithm considers the probability of accepting a user, conditioned on the acceptance realization of previous users. The conditional probability is to avoid the situation that $R^{\mbox{th}}$ budget and the $(R+1)^{\mbox{th}}$ budget are the same so that the update signal for both of them would be 0, which would result in sub-optimal solution. Specifically, if the decision variable is $P(X_i)$, denoting the probability of accepting user $i$, the problem can be formulated as 
    \begin{align}
          \max_{P(X_i), \forall i \in [1,N]}  \sum_{i=1}^N & b_j P(X_i) \\
        \text{subject to:} 
        \sum_{i=1}^N P(X_i) \le R, P_i \in [0,1], & \forall i \in [1,N]
    \end{align}
    Recall that given available resource number $R$, $\lambda^*$ can take the average of the $R^{\mbox{th}}$ budget and the $(R+1)^{\mbox{th}}$ budget. If the $R^{\mbox{th}}$ budget and the $(R+1)^{\mbox{th}}$ budget are the same, the gradient ($b_i - \lambda^*$) for both of them would be 0 while the optimal solution should accept one of them. To avoid this case, we realize the acceptance decision $x_i$ for the current user $i$ and consider conditional probability so that optimal solution of the algorithm formulation can be ensured for the current budget sequence. 
    \item Greedily optimizing w.r.t sampled budget sequence: The problem the algorithm tries to solve is $\min_{alg} \max_{adv} Gap$. In the framework proposed, the adversary is adjusting its strategy to provide a worst sequence with the largest gap. To solve this problem, the algorithm should adjust its strategy to minimize the gap/maximize social welfare for the worst case sampled by the adversary. We note that the NE strategy may not be the optimal solution for each single sequence, so the optimizing step taken for each sequence could make the strategy of the NN deviate from the NE strategy; but on average the value achieved by algorithm NN 
    would be equivalent to the value of NE when the two NNs are playing the zero sum game (which is also the reason why we save the last 1k training results to obtain the average performance in the evaluation section).
\end{itemize}

\subsection{Adversary}\label{sec: adversary}

We consider an oblivious adversary, which generates the complete worst-case input sequence before the sequence is handled by online algorithm \cite{ben1994power}.
Oblivious adversary is the most studied type of adversary, 
which is popular in practical settings since most of the time, input sequence is independent of the algorithm. \eg, in the ski rental problem, the weather is independent of the choice of buying or renting skis. 

Given the pricing strategy of the online algorithm, the goal of our adversary is to generate the user arrival sequence (aka budget sequence) 
to maximize the 
gap between the offline optimal social welfare and the social welfare achieved by the online algorithm. The complete budget sequence has a length of $N$; for the $i^{\mbox{th}}$ slot in the sequence, the adversary produces a probability distribution $P_i(\cdot)$ over budget set $B = \{b_1, b_2,...b_m\}$, which contains all possible budget choices, where $m$ is the number of possible budget values. For each user, the budget is independently chosen according to the probability distribution $P_i(\cdot)$ over all budgets.

Given the price sequence produced by the algorithm,  $p_1, p_2,...,p_N$, the 
 optimization problem of the adversary can be formulated as follows:

\begin{equation}\label{obj:maximization_conditioned_gap}
\max_{\substack{\mathbf{P_i(b_{i}^{(j)})}\\ i \in [1,N], j \in [1, |\Omega|]}} \sum_{ j \in [1, |\Omega|]} \prod_{i=1}^N P_i(b_i^{(j)}) Gap^{(j)}
\end{equation}

subject to:

\begin{align*}
Gap^{(j)} = \sum_{i=1}^N (b_{i}^{(j)} l_i^{(j)} - b_{i}^{(j)} \vmathbb{1}(b_{i}^{(j)} \ge p_{i}) \vmathbb{1}(y_{i}^{(j)} > 0)), \forall j \in [1, |\Omega|] ~(\ref{obj:maximization_conditioned_gap}a)\\
l_{i}^{(j)} = 
\begin{cases}
    1,& if ~ b_i^{(j)} \in { benchmark}\\
    0,              & \text{otherwise}
\end{cases}, \forall i \in [1,N], j \in [1, |\Omega|] ~(\ref{obj:maximization_conditioned_gap}b)\\
 y_{i}^{(j)} = R - \sum_{l=1}^{i-1} \vmathbb{1}(b_{l}^{(j)} \ge p_{l}), ~\forall i \in [1,N], j \in [1, |\Omega|] ~(\ref{obj:maximization_conditioned_gap}c)\\
\end{align*}

\noindent Here $(b_{1}^{(j)}..b_{N}^{(j)})$ is a combination of $N$ elements from set $B$. We use $\Omega$ to denote the set of all such combinations, which is also the set of all possible complete budget sequences of the adversary. $j \in [1, |\Omega|]$ is used to denote one specific combination/ budget sequence, where $|\Omega|$ is the number of all possible combinations. For the complete budget sequence with index $j$, $b_i^{(j)}$ is the budget choice for user $i$. 
$P_i(b_i^{(j)})$ denotes the probability of choosing budget value $b_{i}^{(j)}$ for user $i$. 
 The probability of choosing the complete budget sequence $j$ equals the product of the probabilities of choosing each budget $b_i^{(j)}$ in the sequence. 
 The objective function in (\ref{obj:maximization_conditioned_gap}) gives the expected gap over all possible budget sequences of the adversary. Note that since different budget sequences are generated using the same adversary neural network, we have $P_i(b_i^{(j)}) = P_i(b_i^{(j^{\prime})})$ as long as $b_i^{{j}} = b_i^{(j^{\prime})}$.
 
$Gap^{(j)}$ is defined in (\ref{obj:maximization_conditioned_gap}a), with $l_{i}^{(j)}$ and $y_i^{(j)}$ defined in (\ref{obj:maximization_conditioned_gap}b) and (\ref{obj:maximization_conditioned_gap}c), respectively.
$l_{i}^{(j)} =1$ indicates that budget $b_i^{(j)}$ is counted into the offline optimal social welfare (referred to as the {\em benchmark}), which implies that $b_i^{(j)}$ is among the top-$R$ largest budgets in sequence $j$, {\em i.e.}, user $i$ with this budget is allocated with one unit of resource in the offline optimal solution, and $l_{i}^{(j)} =0$, otherwise. 
$y_i^{(j)}$ is the number of available resource units upon arrival of user $i$ when budget sequence $j$ is discussed. A user will be allocated one unit of resource by the algorithm if and only if : (1) his budget is no smaller than the posted price by the algorithm, {\em i.e.}, $b_{i}^{(j)} \ge p_{i}$; (2) there are available resources to allocate, {\em i.e.}, $y_{i}^{(j)} > 0$. The gap between the offline optimal social welfare and the social welfare of the online algorithm can be computed by summing up the difference between the chosen budget for each user (if it is in the benchmark) and the budget if it is accepted by the algorithm in the respective slot, as in  (\ref{obj:maximization_conditioned_gap}a).

 Unlike the optimization problem on the algorithm side, it is hard to relax 
constraints to derive the gradient of the objective function on $P_i(b_i^{(j)})$ in the above adversary's optimization problem. It is because the numbers of accepted users in the benchmark and by the algorithm are both constrained by the resource number, and directly 
enumerating all possible combinations in $\Omega$ to obtain the gradient on $P_i(b_i^{(j)})$ can soon be intractable when the user sequence becomes long and the budget set becomes large. Nonetheless, the algorithm-side gradients can be used as a reference for the derivation of gradients on the adversary side. Since $(\lambda^*|x_1...x_{i-1})$ measures the change of the optimal objective value of the algorithm when one more unit of resource is available, it is a tight lower bound of all accepted user budgets from user $i$ to $N$ in the optimal solution of the algorithm, derived upon user $i$'s arrival. So the gradient on the probability $P(X_j=1|x_i\ldots x_{i-1})$ of accepting user $i$, $b_i-\lambda^*$, measures 
the difference 
 between the objective value computed when the decision of accepting user $i$ is made by the algorithm and the objective value calculated when the optimal choice for user $i$ is made, assuming optimal choices were adopted in all later slots ($j>i$) (so we only need to compare $b_i$ with $\lambda^*$). We provide a simple example for better illustration. Consider the following user budget sequences: $\{1,3,3,3\}$. Suppose the available resource number is 2, and then $\lambda^* = 3$. The gradient for accepting the first user is 1-3, which equals the difference between (1+3)-(3+3), that is (the objective value when accepting the first user and taking optimal solution for latter slots)-(the objective value of the optimal solution).

Since the gradient of the algorithm's objective function on the probability of choosing a price at slot $i$ measures the effect of the price choice at slot $i$ to the objective value when all later slots are fixed to optimal choices, following this principle, on the adversary side, the gradient on the probability of choosing budget $b_l$ for user $i$ ($P_i(b_l)$) should reflect the objective value of the adversary when budgets of all later slots after $i$ are fixed to optimal budgets and the budget of user $i$ is set to $b_l$. We hence adopt such a heuristic approach for more efficiently computing gradients of the adversary's optimization problem. To derive such gradients, the next question is how to compute the optimal budgets for unrealized slots, since solving the adversary's optimization problem for those slots is not as obvious as solving the algorithm's optimization problem. 

Given partially realised budget sequence and complete price sequence, we can calculate the optimal budget choices for unrealised slots in polynomial time ({\em i.e.}, solving the optimization in (\ref{obj:maximization_conditioned_gap}) with budgets in the first $i-1$ slots realized). 
We only need to consider two cases for gap maximization:   
\begin{enumerate}[(a)]
    \item For each user $j\in[i,N]$, set the budget to be the largest budget in $B$ that is smaller than price $p_j$ in algorithm's price sequence, or post the smallest budget in $B$ if no budget in $B$ is smaller than price $p_j$.
    \item Suppose the number of unrealised slots is $U=N-i+1$ and $k$ is the current number of available resource units. Consider each slot index $j \in[k+i, N]$: For slots between $[i, j-1]$, find $k$ slots with the smallest prices according to the algorithm's price sequence, set the smallest budget from $B$ that is no less than the respective price for each of these slots (such that these users will be accepted but with smallest social welfare increment)\footnote{If all budget values are smaller than respective prices at these $k$ slots, we will not consider the current $j$ any more and will continue to the next $j$.}; supposing the index of the last slot among these $k$ accepted slots is $j^\prime$, for slots after $j^\prime$, set the largest budget from $B$, and for slots before $j^\prime$ other than the $k$ slots picked above, set the respective budget the same way as described in case (a) (such that corresponding users are maximally rejected to increase benchmark value). In this way, we obtain $(U-k)$ budget sequences, and the one with the largest gap will be the budget sequence output in this case.
\end{enumerate}

(a) above represents the case that almost no resource is allocated starting from user $i$ onward; (b) corresponds to the case that all remaining resources are allocated in the following. We do not need to consider 
other scenarios where part of the remaining resources are allocated in the unrealised slots, which will always lead to a smaller gap than that in case (a). Consider the following situation when one budget value is different from that in the respective slot in case (a) and the corresponding user is accepted by the algorithm, resulting in one more unit of resource usage (or the same amount of resource usage if case (a) has already used up all resources) as compared to (a): (i) if the slots whose budgets are counted into the benchmark are not affected, since social welfare achieved by the algorithm is increased by the accepted budget value, 
then the gap between the benchmark and the algorithm will be smaller; (ii) if the new budget value allows the corresponding slot to be counted into the benchmark, replacing another slot with a smaller budget, no matter whether the user of this replaced 
budget 
is accepted or rejected by the algorithm, the gap will always not be larger after replacement. Suppose a user's budget $b^\prime$ is replaced by another user's budget $b^*$ in the benchmark; $b^* > b^\prime$ due to the replacement in the benchmark. If the user with budget $b^\prime$ is accepted by the algorithm after replacement (it definitely was accepted by the algorithm before replacement due to the same price and budget for this slot before replacement), the change of the gap is $(b^*-b^\prime-b^*)-(b^\prime-b^\prime) = -b^\prime$; 
 if $b^\prime$ is rejected by the algorithm after replacement, there are two possible scenarios: (1) $b^\prime$ is also rejected before replacement, and then the change of the gap in this scenario is $(b^* - b^*)-b^\prime = -b^\prime$; (2) $b^\prime$ is accepted before replacement, but is rejected due to resource exhaustion by $b^*$ after replacement, and then the corresponding gap is $(b^* - b^*)-(b^\prime-b^\prime) = 0$. 

The complete algorithm for computing optimal budget sequence of adversary given the price sequence of algorithm is Algorithm~\ref{alg_adv_opt}. The input contains complete price sequence from the algorithm side, partially realised budget sequence, total number of resource units and the budget set. Lines 4-5 compute the number of available resources and social welfare of the algorithm for the partially realised budget sequence part. Case (a) 
is considered in lines 7-12, 
and case (b) is implemented from line 13 to line 26: every possible resource running out situation is considered (line 13), where the available resource number of slots with smallest prices will be accepted with the smallest possible budgets (line 14-18) while the highest possible budgets causing algorithm rejection will be set for other slots (line 14); the highest budgets will be set for slots after using up resources (lines 21-22). Amongst all situations, one with the largest gap will be the final solution in case (b) (lines 25-26).

\begin{algorithm}[h]
\caption{OPT\_BUDGET}
\label{alg_adv_opt}
\begin{flushleft}{\bf Input:} 
Complete price sequence $P$: $p_1,..,p_N$ ;
Partially realised budget sequence: $b_1^\prime,..b_{l-1}^\prime$; Total resource number $R$; Budget set $B=\{b_1,b_2,..b_m\}$\\
 {\bf Output:} 
Optimal budgets for unrealised slots $b_l^*,..b_N^*$ so that complete budget sequence $b_1^\prime,..b_{l-1}^\prime, b_l^*,..b_N^*$ with largest gap can be produced
\end{flushleft}
\begin{algorithmic}[1]
\STATE opt\_gap = 0, res = R, alg\_perf =0,  rej\_b\_seq = \{$b_1^\prime,...,b_{l-1}^\prime, 0,...,0$\} (with $N$ slots)
\STATE ~~~$Gap_{rej}$ = 0\COMMENT{variables for case (a)}
\STATE  acc\_b\_seq = \{\}, $Gap_{acc}$ = 0 \COMMENT{variables for case (b)}
\FOR{$i=1$ to $l-1$}
    \STATE If $b_i^\prime \geq p_i$ \textbf{and} res~$ > 0$:
      alg\_perf += $b_i^\prime$, res -= 1
\ENDFOR

\FOR{$i=l$ to $N$}  \STATE temp\_b = min($B$)
　　\FOR{$j=1$ to $m$}
　　    \STATE If $b_j<p_i$ \textbf{and} temp\_b  $<b_j$: temp\_b = $b_j$ \COMMENT{make users rejected} 
　　\ENDFOR
　　\STATE rej\_b\_seq[i] = temp\_b
　　\IF{$i-l+1 \geq res$} 
　　    \STATE temp\_b\_seq  = rej\_b\_seq, find $res$\# of smallest price from $p_l,...,p_i$, mark their index as $a1$ to $ar$ \COMMENT{use up resources}
　　    \FOR{$k=a1$ to $ar$}
　　        \STATE temp\_b\_seq[k] = $\infty$
　　        \FOR{$j =1$ to $m$}
　　            \STATE If $b_j \geq p_k$ \textbf{and} $b_j<$ temp\_b\_seq[k]: temp\_b\_seq[k] = $b_j$
　　        \ENDFOR
　　    \ENDFOR
　　    \FOR{$o=ar$ to $N$} 
　　         \STATE temp\_b\_seq[o] = max($B$)
　　    \ENDFOR
　　\ENDIF
　　\STATE $Gap$= Benchmark(temp\_b\_seq) - Alg(temp\_b\_seq, $P$)
　　\STATE If $Gap > Gap_{acc}$: acc\_b\_seq = temp\_b\_seq, $Gap_{acc}= Gap$
\ENDFOR

\STATE $Gap_{rej}$ = Benchmark(rej\_b\_seq) - Alg(rej\_b\_seq, $P$)
\STATE If $Gap_{acc} > Gap_{rej}$: \RETURN $Gap_{acc}$, acc\_b\_seq
\STATE Else: \RETURN $Gap_{rej}$, rej\_b\_seq
\end{algorithmic}
\end{algorithm}

\subsection{Training neural networks}


Let $\pi_\theta(h_{i-1}, x_i)$ represent the neural network model of the algorithm, where $\theta$ is the set of parameters in this algorithm NN, $h_{i-1}$ is the history information  before user $i$ and $x_i$ is received information of new user $i$. $h_{i-1}$ and $x_i$ are input to the algorithm neural network $\pi_\theta(\cdot)$. In our implementation, $x_i$ consists of the current index $i$, current available resource number $y_i$, last user's budget $b_{i-1}$, and last realised price $p_{i-1}$; $h_{i-1}$ is the encoded history before user $i$ using $x_1$ to $x_{i-1}$, computed by $h_{i-1} = vec[\delta([x_1,...x_{i-1}]*W+\bm{b})]$, where $W$ is a matrix to give different weights to different slots and information, $\bm{b}$ is a bias matrix, $\delta(\cdot)$ is an activation function, $vec[\cdot]$ is to flat the result 
to a vector so that $h_{i-1}$ and $x_i$ can be concatenated together as input to the algorithm NN. The output of the algorithm neural network $P_i^{\theta}$ is a probability distribution over all possible price choices for user $i$. 

Let $\pi_\omega(v)$ represent the neural network model of the adversary, where $\omega$ is the set of parameters in this adversary NN and $v$ is the input to the adversary NN.  
 $v$ is a vector of latent variables sampled from some prior distribution $p(v)$; we use the Gaussian distribution in our implementation, the same as in GAN \cite{mirza2014conditional}. The output $P^{\omega}$ is $N$ probability distributions over budget set $B$, to produce the budget sequence. 

The complete algorithm for training the two NNs is given in Algorithm \ref{alg_train}. 
Suppose 
the size of the price set 
is $n$ and the size of the budget set is $m$. At each iteration $t$, we first sample a batch of $s$ latent variables, budget sequences and price sequences (lines 3-5). When updating the algorithm NN (line 17), we calculate the gradient of $f$ in (\ref{obj:maximization_conditioned_welfare}) on parameter $\theta$, in order to optimize $f$. According to the chain rule: $\frac{\partial f}{\partial \theta} = \frac{\partial f}{\partial P^\theta(p)} \times \frac{\partial P^\theta(p)}{\partial \theta}$, where $P^\theta(p)$ is the probability output of the algorithm NN. The gradient of $f$ on the probability of accepting user $i$ is $b_i^{(j)} - (\lambda^*|p_1^{(j)}..p_{i-1}^{(j)})$ if $p_l \le b_i^{(j)}$ and 0 if $p_l > b_i^{(j)}$, as discussed in Sec.~3.2. 

Similarly, 
fix a price sequence $j$, the gradient of the adversary objective on the probability of choosing budget $b_l$ at user $i$ equals the gap value (line 10) when all slots after $i$ are fixed to optimal budget choices (line 9). 
Adversary NN update is performed based on the cumulative gradient computed from the entire batch (line 12). 

Note that in  
each training round, we can calculate the gradient of the objective on the probability of any possible output price/budget choice, so that all prices'/budgets' probabilities can be updated, instead of only sampled outputs as in standard reinforcement learning. 

\begin{algorithm}[h]
\caption{Stochastic gradient descent training of algorithm  and adversary NNs}
\label{alg_train}
\hspace*{0.02in} 
\begin{algorithmic}[1]
\FOR{t=1, 2, \ldots}
    \FOR{$\xi$ steps}
        \STATE Sample $s$ latent variables $\{v^{(1)},...,v^{(s)}\}$ from  prior $p(v)$
        \STATE Sample $s$ budget sequences \{$b^{(1)}$,...,$b^{(s)}$\} by feeding sampled latent variables to adversary NN
        \STATE Sample $s$ price sequences \{$p^{(1)}$,...,$p^{(s)}$\} by feeding sampled budget sequences to algorithm NN
        \STATE Update Adversary NN: $gradient=0$
    
         \FOR{$j=1$ to $s$, $i=1$ to $N$, $l=1$ to $m$}
        
                \STATE $b_i^{(j)}=b_l$
                \STATE $gap^{(j)}, seq^{(j)}$ = OPT\_BUDGET ($p^{(j)}, \{b_1^{(j)},..,b_{i}^{(j)}\}, R, B$)
                \STATE $gradient \mathrel{+}=  $ $gap^{(j)}\nabla_{\omega^{t-1}}P_i^{\omega^{t-1}}(b_l)$
       \ENDFOR
    
        \STATE $\omega^{t-1} = \omega^{t-1} + gradient$
    \ENDFOR
    \STATE $\omega^t = \omega^{t-1}$
    \STATE Sample $s$ latent variables $\{v^{(1)},...,v^{(s)}\}$ from  prior $p(v)$
    \STATE Sample $s$ budget sequences \{$b^{(1)}$,...,$b^{(s)}$\} by feeding sampled latent variables to adversary NN
    \STATE Update algorithm NN: 
    $\theta^t = \theta^{t-1}+ \sum_{j=1}^s \sum_{i=1}^{N} \sum_{l=1}^{n} (b_i^{(j)} - (\lambda^*|p_1^{(j)}..p_{i-1}^{(j)})) \vmathbb{1}(p_l \le b_i^{(j)}) \nabla_{\theta^{t-1}}P_i^{\theta^{t-1}}(p_l)$

\ENDFOR    
\end{algorithmic}
\end{algorithm}



\section{Theoretical Analysis}
\label{theory}

\subsection{Existence of Nash equilibrium}
We next discuss the existence of Nash equilibrium (NE) of the game played by the algorithm and the adversary, and the convergence to the NE by our algorithm and adversary NN training.

Suppose the price set of algorithm is $A = \{p_1,p_2,...,p_n\}$ and the budget set of adversary is $B = \{b_1,b_2,...b_m\}$. There are in total $N$ users. We use $\mathbb{A} = \{\alpha_1, \alpha_2,...\alpha_{n^N}\}$ and $\mathbb{B} = \{\beta_1, \beta_2,..., \beta_{m^N}\}$ to denote the pure strategy set of the algorithm and the adversary, respectively, where $\alpha_l, \forall l \in [1, |\mathbb{A}|]$, contains $N$ prices chosen from set $A$ and $\beta_l, \forall l \in [1, |\mathbb{B}|]$, contains $N$ budgets from set $B$. 
A mixed strategy of a player is a random distribution 
over its pure strategies. The set of such mixed strategies is denoted by $\tau_p$ and  $\tau_b$ for the algorithm and the adversary, respectively. The joint mixed strategy set is $\tau_o = \tau_p \otimes \tau_b$. 

For ease of presentation, we use $u_{p}(\cdot)$ to represent the additive inverse of the expected gap 
as the algorithm's utility and $u_{b}(\cdot)$ to represent the expected gap 
as the adversary's utility, so that the goal for both algorithm  and  adversary  is  to maximize their own utility (recall that the algorithm's objective is to maximize the expected social welfare it achieves which is equivalent to maximizing the additive inverse of the expected gap).

A joint mixed strategy $\tau_o^* \in \tau_o$ is the Nash equilibrium , if 
the following holds, where $\tau^*_{po}$ ($\tau^*_{bo}$) represents the mixed strategy of the algorithm (adversary) in the joint mixed strategy $\tau_o^*$: 
 $\forall \tau_p^l \in \tau_p, u_{p}(\tau_o^*) \ge u_{p}(\tau^*_{bo}, \tau_p^l)$, and $ \forall \tau_b^l \in \tau_b, u_b(\tau_o^*) \ge u_b(\tau_b^l, \tau^*_{po})$; 
 or equivalently, 
 $\forall $ pure strategy $ \alpha^l \in \mathbb{A}, u_{p}(\tau_o^*) \ge u_{p}(\tau^*_{bo}, \alpha^l)$, and $\forall $ pure strategy $ \beta^l \in \mathbb{B}, u_b(\tau_o^*) \ge u_b(\beta^l, \tau^*_{po}).$  
It indicates that at NE, neither the algorithm player nor the adversary player can have better utility by unilaterally changing its strategy.
\begin{prop}\label{existense_NE}
There exists a mixed Nash equilibrium in the game played by the algorithm and the adversary.
\end{prop}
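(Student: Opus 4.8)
The plan is to recognize the game as a \emph{finite two-player zero-sum game} and then invoke a classical existence result. The two facts to verify are that each player has finitely many pure strategies and that the expected payoff is a well-defined bilinear function of the mixed strategies; once these hold, the existence of a mixed Nash equilibrium follows from Nash's theorem (equivalently, from von Neumann's minimax theorem in the zero-sum case).

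First I would make the reduction to a matrix game explicit. The algorithm's pure strategy set $\mathbb{A}$ has exactly $n^N$ elements and the adversary's pure strategy set $\mathbb{B}$ has exactly $m^N$ elements, both finite. For a fixed pure-strategy profile $(\alpha_k, \beta_l)$ --- a price sequence together with a budget sequence --- the quantity $Gap$ is a deterministic, bounded real number: given the fixed prices and budgets, each acceptance decision is determined (user $i$ is accepted iff $b_i \ge p_i$ and resource remains) and the benchmark is the sum of the top-$R$ budgets, so $Gap$ is a finite sum of at most $N$ values from the finite set $B$. I therefore define the payoff matrix $C \in \mathbb{R}^{n^N \times m^N}$ by $C_{kl} = Gap(\alpha_k,\beta_l)$. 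Writing a generic algorithm mixed strategy as a probability vector $\sigma_p$ over $\mathbb{A}$ and an adversary mixed strategy as $\sigma_b$ over $\mathbb{B}$, the expected gap equals the bilinear form $\sigma_p^\top C\, \sigma_b$; this is exactly the adversary's utility $u_b$, and its negative is the algorithm's utility $u_p$, matching the definitions in the text. In particular, for a fixed opponent strategy each player's utility is linear --- hence concave --- in its own mixed strategy, and it is jointly continuous in $(\sigma_p,\sigma_b)$.

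With this structure in hand, existence follows by a standard fixed-point argument. The mixed strategy sets $\tau_p$ and $\tau_b$ are nonempty, compact, and convex, being the standard simplices of dimensions $n^N-1$ and $m^N-1$. Because each player's utility is linear in its own strategy with the opponent fixed, each best-response correspondence is nonempty-valued, convex-valued, and upper hemicontinuous --- it returns the closed, convex face of the simplex on which the linear objective is maximized. The joint best-response correspondence on $\tau_o = \tau_p \otimes \tau_b$ then satisfies the hypotheses of Kakutani's fixed-point theorem, and any fixed point $\tau_o^*$ is, by construction, a joint mixed strategy from which neither player can profitably deviate --- precisely the equilibrium condition stated just before the proposition. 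Alternatively, one may bypass the fixed-point machinery and cite Nash's theorem directly, whose hypotheses (finitely many players, finite pure-strategy sets) were verified above.

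I do not expect a substantive obstacle, since the statement is a direct instance of a classical theorem. The only mildly technical point is the bookkeeping of the middle paragraph: confirming that $\mathbb{A}$ and $\mathbb{B}$ are genuinely finite and that $Gap$ is a well-defined, bounded payoff for \emph{every} pure profile, so that the expected utilities are honest bilinear forms rather than merely formal expressions. Once the matrix-game reduction is made precise --- defining $C$ and identifying $u_b$ with $\sigma_p^\top C\, \sigma_b$ --- the appeal to Nash or Kakutani is immediate.
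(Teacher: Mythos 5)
Your proof is correct, but it takes a different route from the paper's. The paper does not invoke Nash's theorem (or von Neumann's minimax theorem) as a black box; it reproves existence from scratch using Brouwer's fixed-point theorem for \emph{functions}. Concretely, it defines the gain functions $g_p^l(\tau_o) = \max\bigl(u_p(\tau_{bo},\mathbb{A}_l)-u_p(\tau_o),\,0\bigr)$ and the continuous adjustment map $F_{pl}(\tau_o) = \bigl(\tau_{po}^l + g_p^l(\tau_o)\bigr)\big/\bigl(1+\sum_{j} g_p^j(\tau_o)\bigr)$ (Nash's own 1951 construction), applies Brouwer to get a fixed point, and then runs a case analysis: if all gains vanish the fixed point is an NE, while a strictly positive total gain leads to the contradiction $u_p(\tau_o^*) > u_p(\tau_o^*)$ after averaging over the player's own mixed strategy. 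Your argument instead reduces the game to a finite matrix game and applies Kakutani's theorem to the best-response correspondence (or cites Nash's theorem directly), so the fixed point is an equilibrium by construction and no contradiction step is needed --- at the price of verifying nonemptiness, convex-valuedness, and upper hemicontinuity of a set-valued map rather than continuity of an explicit function. Your observation that the zero-sum structure makes the expected gap a bilinear form $\sigma_p^{\top} C\,\sigma_b$ is also slightly stronger than what the paper's proof uses: via minimax/LP duality it yields not just existence but the value of the game, which is in fact exactly what the paper exploits later (Section 4.3) when it writes the equilibrium computation as a primal--dual pair of linear programs. Both proofs are sound; the paper's is self-contained and constructive in exhibiting the adjustment map, while yours is more modular and connects existence to the duality structure the paper needs anyway.
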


\begin{proof}



The existence of the mixed NE is based on Brouwer's Lemma: a continuous function $F(\cdot)$, which maps a non-empty, compact, convex set to the set itself, always has a fixed point $x^*$ such that $x^* = F(x^*)$. Since the probability simplex is non-empty, compact and convex, all we need to show is to find such a $F(\cdot)$.

We first define a function $g_{p}^l(\tau_o) = max(u_{p}(\tau_{bo}, \mathbb{A}_l)-u_{p}(\tau_o), 0)$, where $\tau_{bo}$ means adversary's strategy in $\tau_o$. Given joint mixed strategy $\tau_o$, this function evaluates by fixing the mixed strategy of adversary $\tau_{bo}$, whether the pure strategy $\mathbb{A}_l \in \mathbb{A}$ would be better than the mixed strategy of algorithm in $\tau_o$ for the algorithm. 

Next, we define a continuous function $F_p: \tau_o \rightarrow \tau_o$, with $F_{pl}(\tau_o) = \frac{\tau_{po}^l + g_p^l(\tau_o)}{1+\sum_{j=1}^{|\mathbb{A}|} g_p^j(\tau_o) }, \forall l \in [1,|\mathbb{A}|]$, where 
$\tau_{po}^l$ is algorithm's probability of choosing pure strategy $\mathbb{A}_l$ in its mixed strategy in $\tau_o$. The denominator is to normalize $F_{pl}(\tau_o)$ so that $\sum_{l=1}^{|\mathbb{A}|} F_{pl}(\tau_o) =1$, that is, 
the values of $F_{pl}(\tau_o)$ for all $l$ form a probability distribution. Hence,  $F(\tau_o)$ for all $l$ maps the  non-empty, compact,  convex set to the set itself, with only the algorithm's mixed strategy updated. Based on the Brouwer's Lemma, there exists a fixed point, which is a joint mixed strategy, $\tau_o^*$, 
satisfying
\begin{equation}\label{nash_existence}
\tau_{po}^{*l} =  \frac{\tau_{po}^{*l} + g_p^l(\tau_o^*)}{1+\sum_{j=1}^{|\mathbb{A}|} g_p^j(\tau_o^*) }, \forall l
\end{equation}

There are the following possible cases regarding equation (\ref{nash_existence}): (i)~If $\sum_{l=1}^{|\mathbb{A}|} g_p^l(\tau_o^*)=0$,  {\em i.e.}, 
fixing the adversary's mixed strategy according to the definition of $g_p^l(\cdot)$, varying algorithm's strategy can not obtain larger utility for the algorithm: no pure strategy of the algorithm is better than this mixed strategy; deviating this mixed strategy towards any pure strategy cannot lead to a larger utility. By the definition of Nash Equilibrium, the mixed strategy $\tau_o^*$ in this case is already an NE. (ii) If $\sum_{l=1}^{|\mathbb{A}|} g_p^l(\tau_o^*) > 0$, then for $\forall l, g_p^l(\tau_o^*) >0$, as otherwise, $\tau_{po}^{*l}$ can not be the same. $g_p^l(\tau_o^*) >0$ is equivalent to $u_p(\tau_{bo}^*,\mathbb{A}_l) > u_p(\tau_o^*)$. Taking expectation of both sides of this inequality over $\tau_{po}^*$, we can derive $\sum_{l=1}^{|\mathbb{A}|} \tau_{po}^{*l} u_p(\tau_{bo}^*,\mathbb{A}_l) > \sum_{l=1}^{|\mathbb{A}|} \tau_{po}^{*l} u_p(\tau_o^*)$. For the right hand side of this new inequality, we have $ \sum_{l=1}^{|\mathbb{A}|} \tau_{po}^{*l} u_p(\tau_o^*) =u_p(\tau_o^*)\sum_{l=1}^{|\mathbb{A}|} \tau_{po}^{*l} = u_p(\tau_o^*) $, while for the left hand side, we can get $\sum_{l=1}^{|\mathbb{A}|} \tau_{po}^{*l} u_p(\tau_{bo}^*,\mathbb{A}_l) = u_p(\tau_o^*)$ by definition. Combining both sides together, we reach a contradiction $u_p(\tau_o^*) > u_p(\tau_o^*)$; so this case is impossible. Therefore, we are only left with case (i), which indicates NE. Similar discussion can be applied to the adversary side.
\end{proof}


\subsection{Convergence}
\label{sec:convergence}

\subsubsection{Convergence proof}
We next show that the strategies of the algorithm and the adversary converge to NE. We follow similar idea in \cite{goodfellow2014generative}.
\begin{prop}
If 
in each round $t$ of training in Alg.~2, the adversary is allowed to reach its 
optimal strategy given the current strategy of the algorithm, 
 the algorithm NN converges to the Nash equilibrium strategy. At the same time, the optimal strategy of the adversary given the NE strategy of the algorithm is also the adversary's NE strategy.  
\end{prop}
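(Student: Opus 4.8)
The plan is to mirror the convergence argument of the original GAN paper \cite{goodfellow2014generative}, exploiting the fact that the game here is a finite zero-sum game whose expected payoff is \emph{bilinear} in the two mixed strategies. Writing the adversary's utility as $u_b(\tau_p,\tau_b) = \tau_p^{\top} C\, \tau_b$, where $C$ is the gap payoff matrix whose entries are the deterministic gaps of pure strategy pairs $(\alpha_l,\beta_k)$, the first observation I would record is that for every fixed adversary strategy $\tau_b$, the map $u_b(\cdot,\tau_b)$ is linear, hence convex, in the algorithm's strategy $\tau_p$. The quantity the algorithm actually minimizes is the worst-case gap $U(\tau_p) := \max_{\tau_b} u_b(\tau_p,\tau_b)$ (recall $u_p = -u_b$, so minimizing the worst-case gap is maximizing the algorithm's worst-case utility), and as a pointwise supremum of convex, indeed linear, functions of $\tau_p$ it is itself convex on the probability simplex. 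This structural fact is what makes the GAN-style argument go through.

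First I would formalize the hypothesis of the proposition: in round $t$ the adversary reaches an optimal response $\tau_b^{*}(\tau_p)\in\arg\max_{\tau_b} u_b(\tau_p,\tau_b)$, so that the algorithm's per-round update is a (sub)gradient step on $u_b(\tau_p,\tau_b^{*}(\tau_p)) = U(\tau_p)$. Next I would invoke the envelope property used in \cite{goodfellow2014generative}: if $U = \sup_\alpha f_\alpha$ with each $f_\alpha$ convex, then the gradient $\nabla f_{\alpha^{*}}(\tau_p)$ at a maximizing index $\alpha^{*}$ is a subgradient of $U$ at $\tau_p$. Hence the update computed against the optimal adversary response is a valid subgradient descent step for the convex function $U$ over the compact convex simplex, and with sufficiently small step sizes the algorithm's strategy converges to a global minimizer $\tau_p^{*}$ of $U$. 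Identifying $\min_{\tau_p} U(\tau_p) = \min_{\tau_p}\max_{\tau_b} u_b$ with the value of the game via von Neumann's minimax theorem, $\tau_p^{*}$ is a security (minimax) strategy, i.e. the algorithm component of a Nash equilibrium.

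For the second claim I would use the saddle-point characterization of NE in a zero-sum game: $(\tau_p^{*},\tau_b^{*})$ is an NE if and only if $u_b(\tau_p^{*},\tau_b)\le u_b(\tau_p^{*},\tau_b^{*})\le u_b(\tau_p,\tau_b^{*})$ for all $\tau_p,\tau_b$. The left inequality is immediate because $\tau_b^{*}=\tau_b^{*}(\tau_p^{*})$ is, by definition, the adversary's best response to $\tau_p^{*}$. The right inequality is where I would connect back to convergence: at the fixed point the algorithm's subgradient of $U$ vanishes, so by convexity $\tau_p^{*}$ is also a global best response to $\tau_b^{*}$, yielding the saddle point and hence that $\tau_b^{*}$ is an equilibrium strategy for the adversary.

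I expect the main obstacle to be exactly this last step. In a general finite zero-sum game a best response to a minimax strategy need \emph{not} itself be a maximin strategy (a pure best response to a fully mixed minimax strategy can fail to be optimal for the maximizer), so the second claim cannot follow from optimality of $\tau_p^{*}$ alone; it genuinely relies on the coupled fixed-point structure in which the algorithm is stationary against the very adversary response it induces, rather than on pairing the limit $\tau_p^{*}$ with an arbitrary best response. I would therefore take care to argue mutual best response at the limit. The subgradient-descent convergence rate, and the possible non-uniqueness of the minimizer of $U$ (so that, absent a non-degeneracy assumption, convergence is to the \emph{set} of algorithm equilibrium strategies rather than to a single point), are the other two places where the GAN-style argument must be stated with the appropriate qualifications.
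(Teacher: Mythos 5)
Your argument for the first claim is essentially the paper's own: the paper likewise fixes the adversary at its best response, notes that the algorithm's objective is then convex in its strategy (via its LP reformulation, problem~(3)), invokes the identical envelope property from Goodfellow et al.\ --- the gradient evaluated at the maximizing inner player is a subgradient of the pointwise supremum --- and concludes by convexity that sufficiently small gradient steps converge to the optimal algorithm strategy. Your version is tighter in two respects that do not change the route: you make the bilinear structure $u_b(\tau_p,\tau_b)=\tau_p^{\top}C\,\tau_b$ explicit, and you invoke von Neumann's minimax theorem to identify the minimizer of $U(\tau_p)=\max_{\tau_b}u_b(\tau_p,\tau_b)$ with the algorithm's equilibrium strategy, a step the paper takes for granted. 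The genuine divergence is on the second claim. The paper dispatches it with a single informal sentence --- training ``will not end'' until both players are at NE, since otherwise one of them would keep updating --- and never confronts the subtlety you isolate. You are right that in a finite zero-sum game a best response to a minimax strategy need not itself be a maximin strategy: e.g., with adversary payoff matrix $\left(\begin{smallmatrix}0&0\\1&-1\end{smallmatrix}\right)$, the algorithm strategy $(\tfrac12,\tfrac12)$ is minimax, every adversary strategy is a best response to it, yet the second row is not maximin. So the second claim cannot follow from optimality of $\tau_p^{*}$ alone; it needs exactly the coupled fixed-point, mutual-best-response saddle argument you supply (stationarity of the algorithm against the very response the adversary produces gives the second saddle inequality). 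On this point, and in your explicit caveats about non-uniqueness of the minimizer and step-size conditions, your proposal is more careful than the published proof, which leaves these as implicit assumptions.
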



\begin{proof}
Given the strategy of the adversary, the algorithm's problem (\ref{obj:maximization_conditioned_welfare}) is equivalent to problem (\ref{equi_obj}), which is a convex function of the algorithm strategy. According to \cite{goodfellow2014generative}, we know that for a function $g(x) = sup_{\alpha \in \mathcal{A}}g_{\alpha}(x)$ with $g_{\alpha}(x)$ convex in x for every $\alpha$, we have $\partial g_{\alpha^*}(x) \in \partial g(x) $ if $\alpha^* = arg sup_{\alpha \in \mathcal{A}}g_{\alpha}(x)$. That is, the gradient of $g_{\alpha}(x)$ on x when $\alpha$ is optimally calculated based on the value of $x$ is equivalent to the gradient of function $sup_{\alpha \in \mathcal{A}}g_{\alpha}(x)$. In our case, we can solve for the NE by calculating the gradient on the algorithm strategy when the adversary strategy is optimally calculated based on the algorithm strategy. Due to the convexity of objective (\ref{equi_obj}) for computing the algorithm strategy, sufficiently small gradient descent updates of the algorithm strategy can lead to convergence to the optimal algorithm strategy.

The training process will not end until both the algorithm and the adversary converge to NE strategy, as otherwise, one of them would keep changing its strategy to get better utility and the updating process would continue.
\end{proof}

To allow the adversary to reach its optimal strategy given the current strategy of the algorithm, in each training iteration, $\xi$ would be very large in theory and the gradient of adversary's objective function on $P_i(b_i^{(j)})$ should be computed exactly by enumerating all possible budget sequences in optimization (\ref{obj:maximization_conditioned_gap}). Besides, when a neural network is used, we are optimizing $\theta$ instead of the strategy itself, as in the proof of convergence, and we may not ensure that the adversary is allowed to reach its optimal strategy. However, we will show in our empirical studies even when using $\xi=1$ and our heuristic approach in computing the gradient (see Sec.~\ref{sec: adversary}), the algorithm and adversary NNs converge close to NE.

\subsection{Calculation of Nash equilibrium Strategies} \label{ne_compu}

We first describe how to compute a Nash equilibrium strategy in our zero-sum game, given mixed strategy $\tau_p$ for the algorithm, $\tau_b$ for the adversary and the payoff matrix $C$ with dimension $\mathbb{B} \times \mathbb{A}$, where each entry $(i,j)$ corresponds to the gap achieved by the budgets-prices pair $(\mathbb{B}_i,\mathbb{A}_j)$. If $\tau_p$ is known, the expected gap with pure strategy $\mathbb{B}_i$ of the adversary is the $i^{\mbox{th}}$ element in $C\times\tau_p$ (vector containing expected gaps  corresponding to all possible pure strategies of the adversary). The adversary wants to maximize the gap, so it will play strategies corresponding to the largest value in $C\tau_p$. In order to minimize the gap, the best strategy of the algorithm is to minimize the maximum value in $C\tau_p$: $v_p = min_{\tau_p} max\{(C\tau_p)_1,...,(C\tau_p)_{|\mathbb{B}|}\}$. In contrast, the best response of the adversary is to maximize the minimum value in $\tau_b C$: $v_b = max_{\tau_b} min((\tau_b C)_1,...,(\tau_b C)_{|\mathbb{A}|})$. 
 
The computation of $\tau_p$, NE strategy of the algorithm, can be formulated into the following linear program, where $c_i$ is the $i^{\mbox{th}}$ row of matrix $C$:
\begin{equation}\label{NE_alg}
  Min_{\tau_p} v_p  
\end{equation}
subject to:
\begin{align*}
    \tau_p^j \ge 0, \forall j \in [1,|\mathbb{A}|] &\\
    \tau_p^1+\tau_p^2+\cdots+\tau_p^{|\mathbb{A}|}=1, &  \\
             c_i \tau_p \le v_p, \forall i \in \{1, |\mathbb{B}|\}&
\end{align*}

The computation of $\tau_b$, NE strategy of the adversary, can be formulated into the following linear program, where $c_j$ represents the $j^{\mbox{th}}$ column of $C$:
\begin{equation}\label{NE_adv}
    Max_{\tau_b} v_b
\end{equation}
subject to: 
\begin{align*}
\tau_b^i \ge 0, \forall i \in [1,|\mathbb{B}|] &\\
    \tau_b^1+\tau_b^2+\cdots+\tau_b^{|\mathbb{B}|}=1, &  \\
             \tau_b c_j \ge v_b, \forall j \in \{1, |\mathbb{A}|\}&
\end{align*}
 
Linear programs (\ref{NE_alg}) and (\ref{NE_adv}) are duals of each other. This can easily be verified by the Lagrangian function: $Max_{\tau_b,v_b}L = v_b + \sum_{j=1}^{|\mathbb{A}|} \tau_p^j (\tau_b c_j - v_b) + v_p (1-\tau_b^1-..-\tau_b^{|\mathbb{B}|}) + \sum_{i=1}^{|\mathbb{B}|} z_i \tau_b^i$.
 The gap at the Nash equilibrium is $v^* = v_p=v_b$ given strong duality of linear programs. 
  We can see that solving the Nash equilibrium strategies of our zero-sum game is equivalent to solving two linear programs. 
 
 Next, we discuss the gap and strategies at Nash equilibrium of our system in two cases. 
 
\subsubsection{Resource number no smaller than job number ($R \ge N$)} \label{res>n}

In this case, pricing/budgeting strategies for different users are the same because resource is always abundant upon arrival of each user. So we only need to focus on one user. For any user, the pure strategies for the algorithm and the adversary are sets $A$ and $B$, respectively; $\tau_p$ and $\tau_b$ denote the mixed strategy over set $A$ and $B$, respectively, and $p_l$ and $b_l$ are the probability of choosing a respective pure strategy.
The payoff matrix $C$ of a single user is as follows (since resource is enough to accept the user):

{\centering
\[
C_{m \times n} = 
\begin{bmatrix} 
   c_{ij} = b_i \vmathbb{1}(b_i < p_j)
    \end{bmatrix}
_{\forall i \in \{1,m\}, j \in \{1,n\}}
\]
}

We can derive NE value using (\ref{NE_alg}) (or (\ref{NE_adv})).


If the smallest price from set $A$ is no larger than any budget in set $B$, one of the columns of $C$ is a zero vector (because $\vmathbb{1}(b_i < p_j)$ will always be zero if $p_j$ is the smallest price), and we can get $z^*=0$, that is, the optimal strategy of the algorithm is to accept all users with the smallest price. 

\subsubsection{Resource number smaller than job number ($R < N$)}

The principle of calculating Nash equilibrium in this case is the same as in the previous case, by replacing the pure strategy set from $A$ to $\mathbb{A}$ and $B$ to $\mathbb{B}$ because we can not calculate Nash equilibrium by a single slot now. $\tau_p$ and $\tau_b$ denote the mixed strategy over set $\mathbb{A}$ and $\mathbb{B}$; $\alpha_l$ and $\beta_l$ are pure strategy $l$ in each set, respectively. The payoff matrix $C$ is: \\
\begin{center}
\[
C_{|\mathbb{B}| \times |\mathbb{A}|} = 
\begin{bmatrix} 
   c_{ij}  = Benchmark(\beta_i)-Alg(\beta_i,\alpha_j), 
    \end{bmatrix}_{\forall  i \in \{1,|\mathbb{B}|\}, j \in \{1,|\mathbb{A}|\}} 
\]
\end{center}

We note that the number of pure strategies of both the algorithm and the adversary is increasing exponentially with the length of user sequence, causing computation complexity in solving the corresponding linear program due to enormous numbers of decision variables and constraints.
\section{Empirical Studies}
Through empirical studies, we seek to further answer two questions: (1) Are our algorithm and adversary NN update methods effective in converging to NE? (2) How is the performance of our learned online algorithm compared to state-of-the-art 
online algorithms on the same problem? 

The algorithm neural network consists of three parts: history encoding layer, fully connected layers to process input from the encoding layer, and the output layer. The activation function of the first two parts is Leaky ReLU, and Softmax is used as the output layer function. The adversary neural network is composed of fully connected layers and an output layer. Activation function for the fully connected layers is Leaky ReLU, and Softmax is the output layer function.

In our  experiments, the algorithm NN has 3 fully connected layers following history encoding and input concatenation, and then output layer follows; the adversary NN has 4 fully connected layers before the output layer. The number of neurons in each layer is adjusted accordingly under different experimental settings, within the range of [30, 80]. $\xi$ in Algorithm \ref{alg_train} is set to 1 (same as in \cite{goodfellow2016nips}).  One episode 
indicates training with one complete price/budget sequence. 

\subsection{NN Training Effectiveness}
We evaluate our training of algorithm neural network and adversary neural network separately by replacing the other NN by a multiplicative weight (MW) updated player and compare the strategy learned by the respective NN and the NE strategy calculated with standard game theory methods. MW is a commonly used algorithm in playing zero-sum games to approximately converge to the NE \cite{arora2012multiplicative}, which maintains weights over all pure strategies and the probability of choosing 
each pure strategy is proportional to the corresponding weight. In the training process, weights are updated as follows:
$w^{t+1}(a) = w^t(a)*(1+\eta r^t(a))$, 
where $\eta$ is the learning rate (set to 0.01 in our experiments), $r^t(a)$ is the utility of pure strategy $a$ in updating iteration $t$ (normalized between 0 and 1 to limit the change of weights). We set $s=10$ (Algorithm \ref{alg_train}) in this set of experiments.
\subsubsection{Algorithm NN}
Consider a scenario where the total number of units of resources is 5; both the price set and the budget set are $ \{1,2,3,4,5\}$. The adversary has $25$ pure strategies, each including the first $i$ slots in the sequence $seq = [1,1,1,1,1,2,2,2,2,2,3,3,3,3,3,4,4, \\ 4,4,4,5,5,5,5,5]$, $i \in [1,25]$. Such pure adversary strategies represent worst-case user request sequences in standard online algorithm analysis \cite{zhang2017optimal}: different sequence lengths lead to different final resource utilization levels; increasing budgets make online algorithm perform worst as compared to the offline optimum, as the online algorithm may well exhaust resources by allocating them to earlier users with small budgets, while the offline optimum allocates resources to latecomers. In deciding the mixed strategy of the adversary, $r^t(a)$ represents the expected gap of budget sequence $a$. 

 Given the adversary strategies, we can derive the NE of the game by formulating the zero-sum game into a linear program~(\ref{alg_test}) and solve it using standard method ({\em e.g.}, an interior point method), and obtain that the expected gap and the probability of accepting each user at Nash equilibrium is 7.834 and $[0,0,0,0,0,0,0,0.083, 0.5,0.5,\\ 0.333,0.333, 0.333, 0.333, 0.333,0.25,0.25,0.25,0.25, 0.25,0.2,0.2,0.2,\\0.2,0.2]$, respectively. Note that this strategy is not the only NE strategy of the algorithm: for example, a  probability of 0.75 for accepting the ninth user and a probability of 0.25 for accepting the tenth user also constitute a NE strategy, as long as the gap between the algorithm strategy and each pure strategy of the adversary is no larger than 7.834.
The formulation of effectiveness test of algorithm NN is as follows:
 The problem formulation is similar to (\ref{NE_alg}). Instead of computing the probabilities of choosing different prices, we let decision variables in the LP be the probabilities of accepting different users so that the number of decision variables is only 25 (we assume the algorithm knows the maximum length of user request sequence only when computing this NE). $P_i$ denotes the probability of accepting user $i$. The linear program is as follows:

 \begin{equation}\label{alg_test}
     \min_{\substack{P_i, i \in [1,25]}} z
 \end{equation}
subject to:
\begin{align*}
    P_1+P_2+\cdots+P_{25} \le 5,~ (\ref{alg_test}a)  \\
    0 \le P_i \le 1, \forall i \in [1,25]~(\ref{alg_test}b)\\
    \sum_{i=1}^j seq[i]P_i \le z, \forall j \in [1,25]~(\ref{alg_test}c)
\end{align*}

\noindent(\ref{alg_test}a) is the resource constraint. 
(\ref{alg_test}c) corresponds to all pure strategies of the adversary, where $seq[i]$ denotes the $i^{\mbox{th}}$ budget in the budget sequence. We can then solve this linear program to get NE.
 The training curve of the algorithm neural network is shown in Fig.~\ref{fig:algonly}. 
 We can see that the time-averaged gap (averaged over last 500 episodes) converges to around 8 after 40k episodes of training, very close to the computed gap at NE, showing effectiveness of our update method of training the algorithm NN in ensuring convergence to NE. We note that in this experiment, even though the size of the price set is only 5, when there are 25 users, the number of pure strategies of algorithm is as many as $5^{25}$. 
 
 We can then compute the probability of accepting each user from the NN output (averaged over last 9k episode): 
 $[0, 0, 0, 0, 0, 0.226, 0.222, \\ 0.213, 0.173, 0.115, 0.523,  0. 36,  0.324, 0.212, 0.153, 0.534, 0.352, 0.226, \\0.203, 0.113,  0.345, 0.221,  0.209, 0.208, 0.091]$.   
The largest gap resulted from the above strategy under all pure strategies of the adversary is within 0.5 from the gap value at NE ({\em i.e.}, 7.835), showing closeness of our strategy with the computed NE strategy of the algorithm.  The running time for completing 100k episodes of algorithm NN training is less than 15 minutes.

\subsubsection{Adversary NN}
When testing the effectiveness of adversary neural network, the mixed strategy of the algorithm is updated by MW. 
There are an exponential number of decision variables in the linear program (LP) ( (\ref{NE_adv}) in Section \ref{ne_compu}
) to solve for the adversary's NE strategy using standard game theory method (as its probability distribution is over all combinations of $N$ element from the budget set).
To make this problem computable (without causing out-of-memory error when solving the LP using Python linprog on a server with 64 GB memory), we set the sequence length to 7, price/budget set as $A = \{1,2,3\}$, and the total number of resource units to 3. The pure strategies of the algorithm are the following 3 price sequences: $ seqs = [1,1,2,2,3,3,3],[1,1,1,2,2,2,3]$, $ [1,2,2,2,3,3,3]$, which are chosen following pricing function design in existing online algorithms \cite{zhang2017optimal}, which posts lower prices at the start when resources are abundant so that more users can be accepted while raising the prices with the consumption of resources. 
  In computing mixed strategy of the algorithm using MW, $r^t(a)$ is the additive inverse of the expected gap for choosing price sequence $a$. 

By solving LP (7) 
in our concrete setting, we obtain that the expected gap is 4.333 and the mixed strategy of the adversary is to choose among budget sequences $[1,1,1,1,2,2,2],[1,1,2,2,3,3,3],[1,\\ 2,1,2,3,3,3]$ with a $\frac{1}{3}$ probability each, at Nash equilibrium. Note that this NE strategy of the adversary is not unique, any strategy leading to a gap between this strategy and any pure strategy of the algorithm no smaller than 4.333 is an NE strategy.

Fig.~\ref{fig:advonly} shows the training curve of the adversary NN,  which took about 8 minutes till convergence. We can see that the time-averaged gap is around 4.33 after 60k episodes of training, close to the NE gap value, exhibiting the NE learning ability of our update method for the adversary NN. The strategy learned by our adversary NN is to choose budget 1 with probability of 1 for the first 4 users and budget 2 with probability of 1 for the last 3 users, which corresponds to the first budget sequence. 
Though different from the NE strategy computed above, only choosing the first sequence is also an NE strategy since its expected gap is no less than the NE gap value, given unilateral strategy change of the algorithm. 

\begin{figure*}[t]
	\captionsetup{width=0.3\textwidth}
	\vspace*{-3mm}
	\begin{center}
		\begin{minipage}{0.32\linewidth}
			\includegraphics[width=\textwidth]{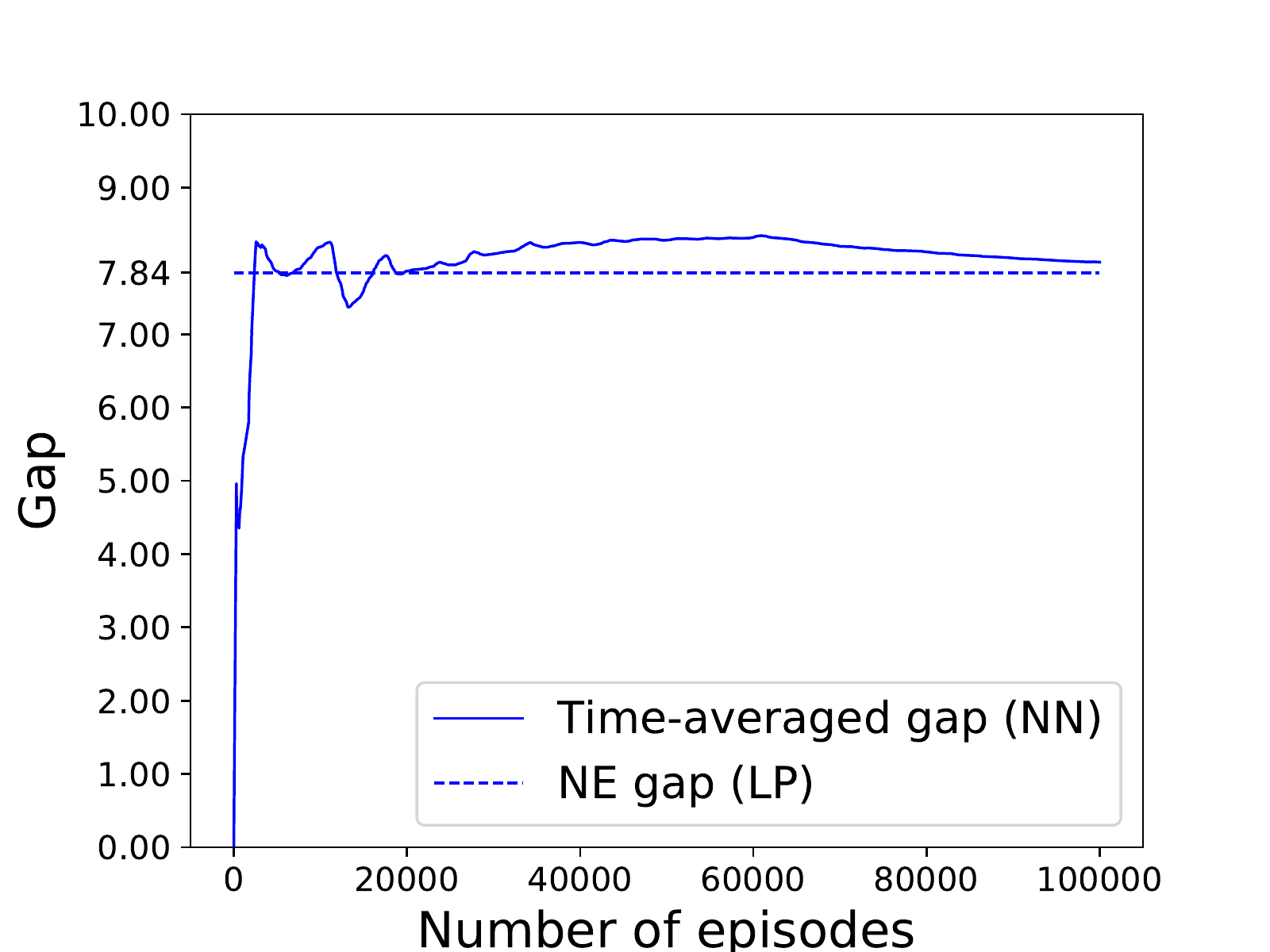}
			\vspace*{-2mm}
            \caption{Algorithm NN training process}
            \label{fig:algonly}
		\end{minipage}
		\begin{minipage}{0.32\linewidth}
			\includegraphics[width=\textwidth]{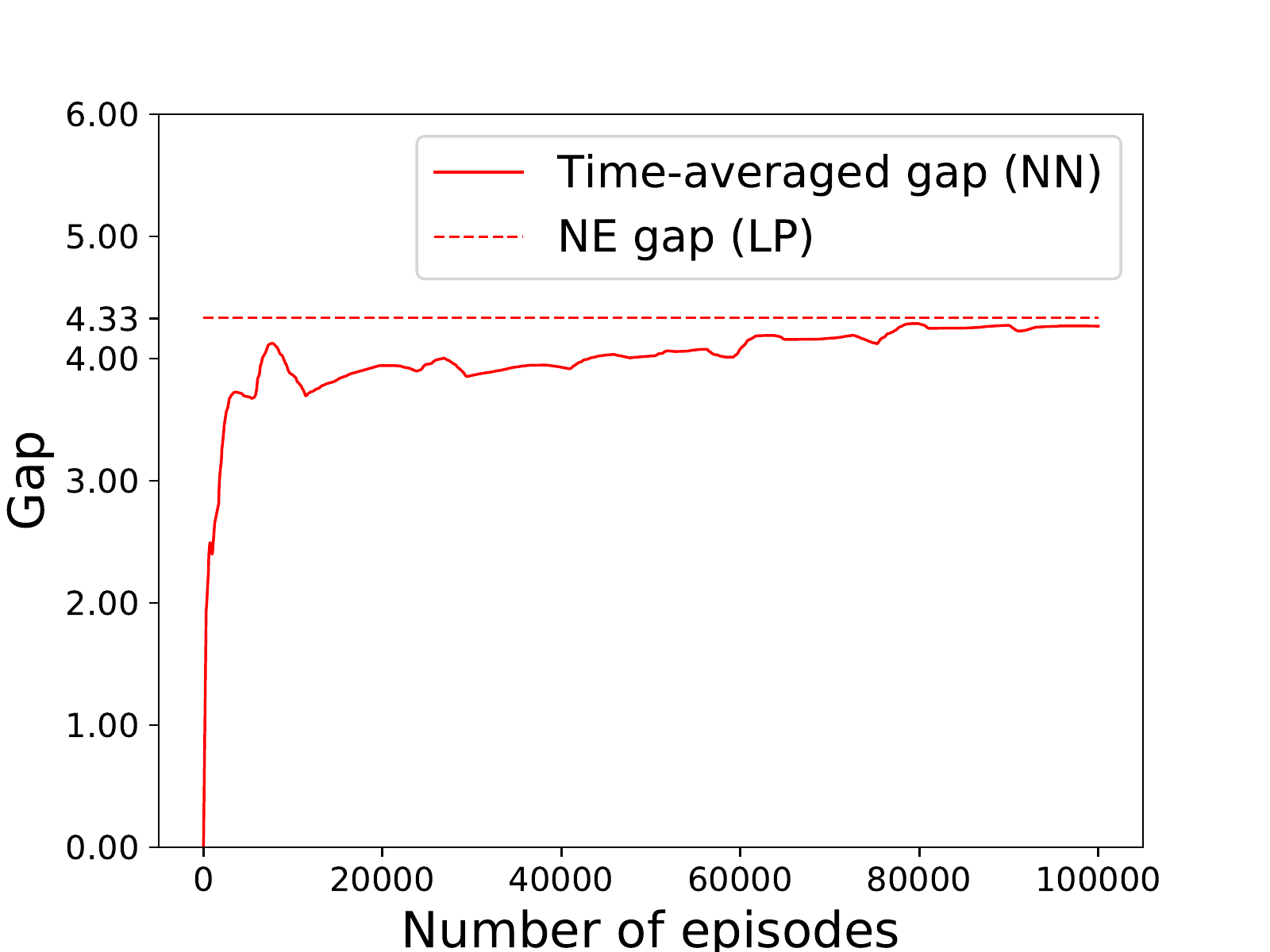}
				\vspace*{-2mm}
            \caption{Adversary NN training process}
            \label{fig:advonly}
		\end{minipage}
		\begin{minipage}{0.32\linewidth}
			\includegraphics[width=\textwidth]{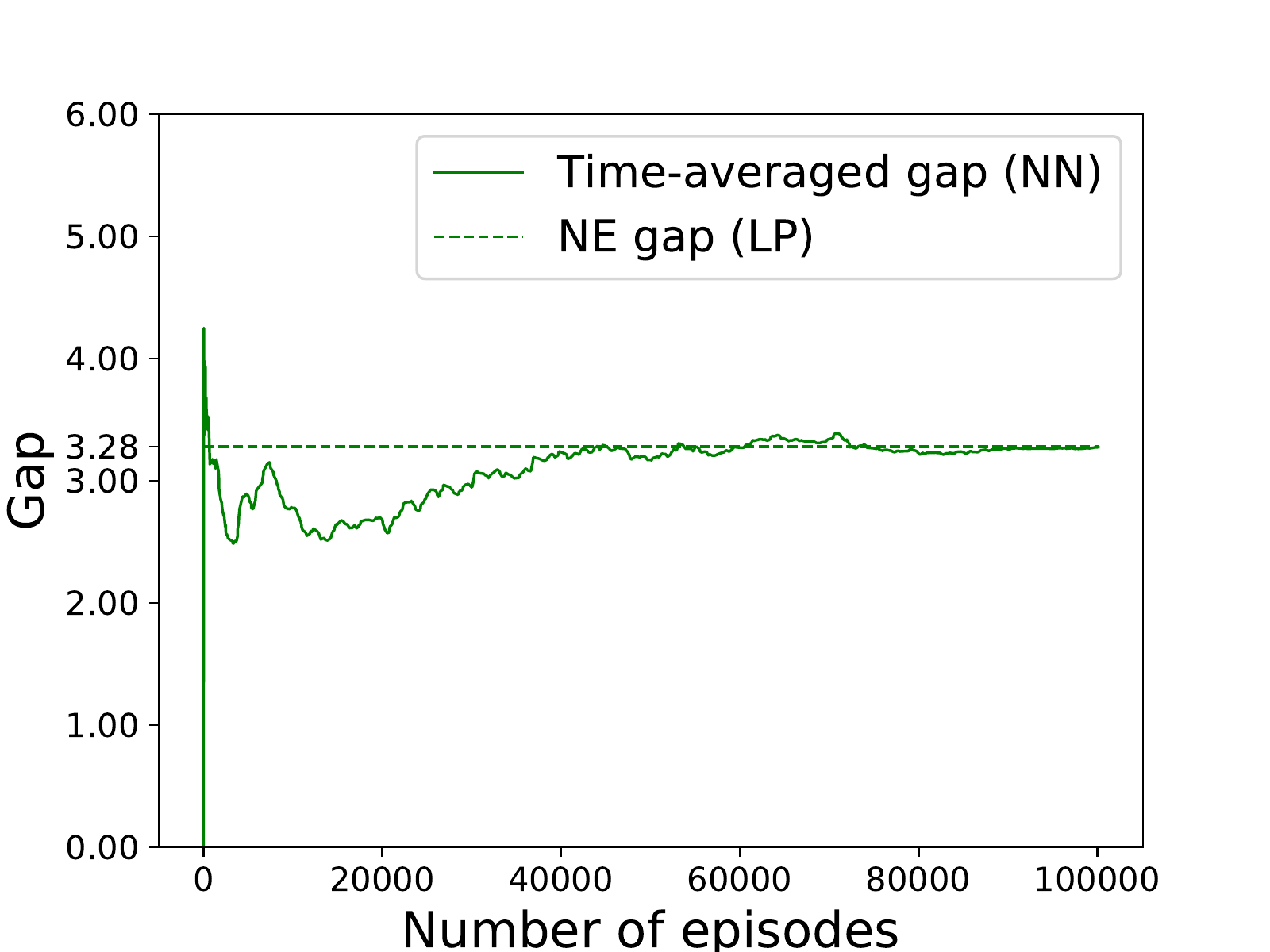}
			\vspace*{-2mm}
    \caption{Two NNs training process}
    \label{fig:twonn}
		\end{minipage}
	\end{center}
	\begin{center}
		\begin{minipage}{0.32\linewidth}
			\includegraphics[width=\textwidth]{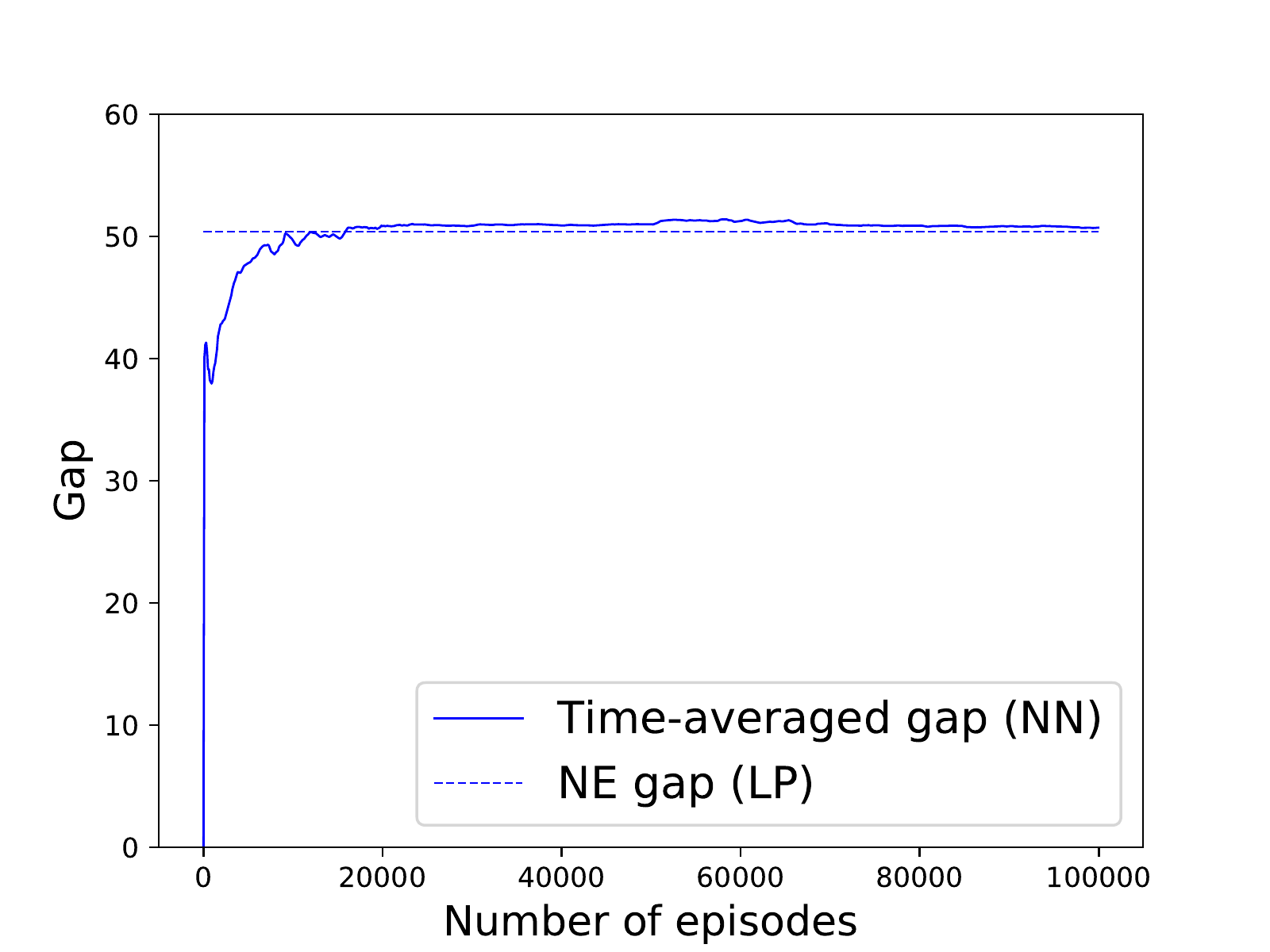}
			\vspace*{-2mm}
            \caption{Algorithm NN training process}
            \label{fig:algonly_20_40}
		\end{minipage}
		\begin{minipage}{0.32\linewidth}
			\includegraphics[width=\textwidth]{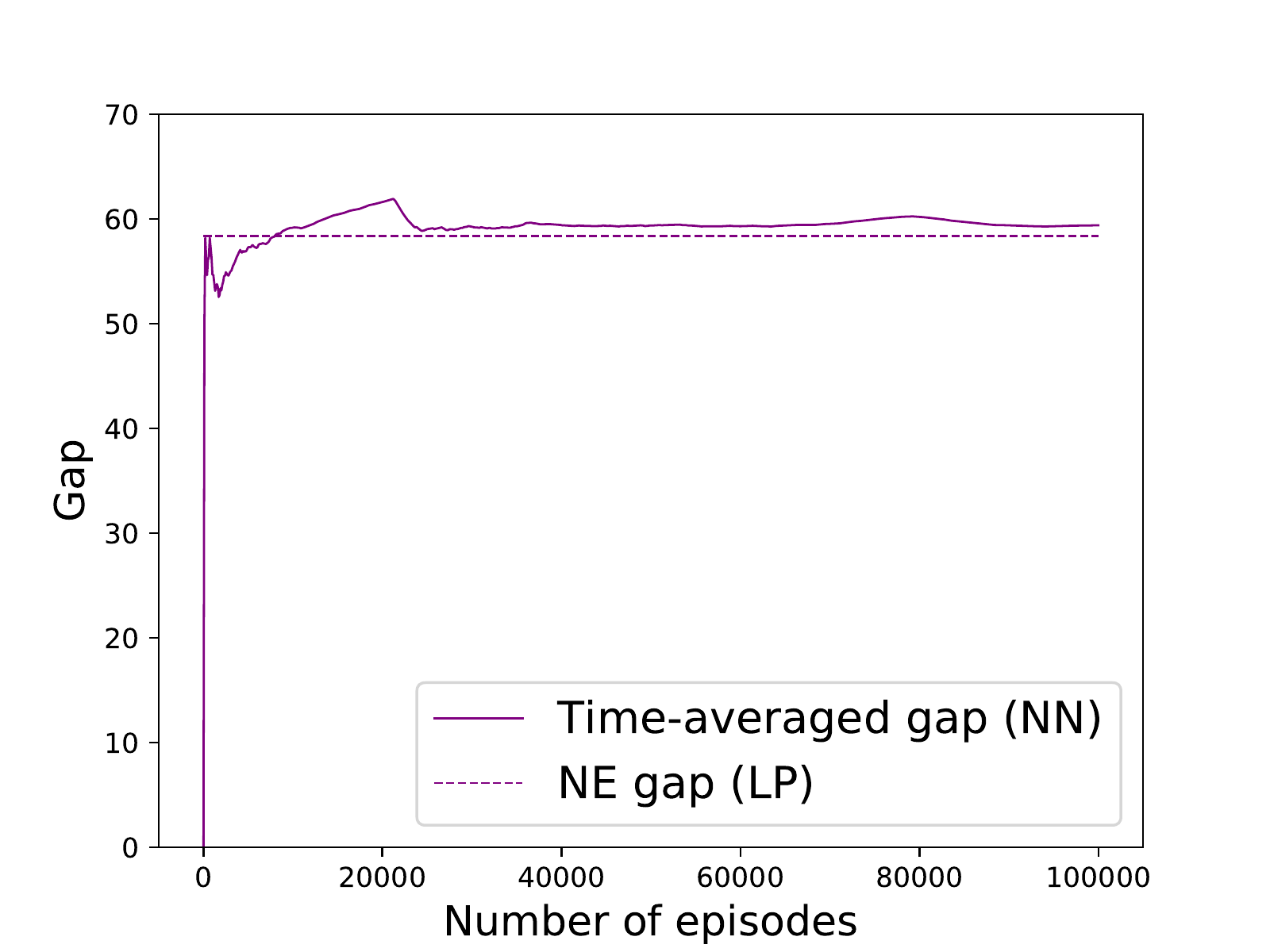}
				\vspace*{-2mm}
            \caption{Algorithm NN training process}
            \label{fig:algonly_20_60}
		\end{minipage}
		\begin{minipage}{0.32\linewidth}
			\includegraphics[width=\textwidth]{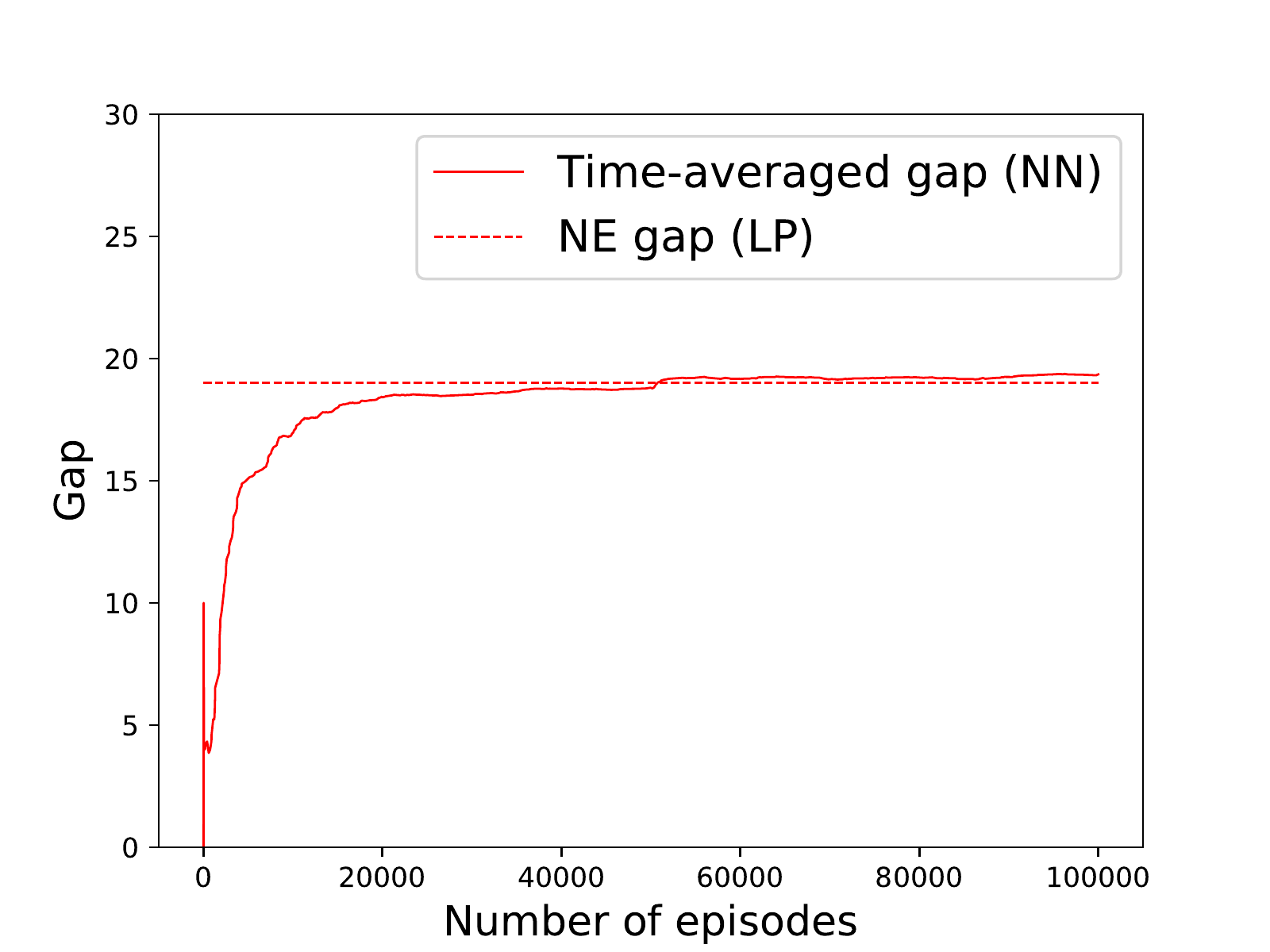}
			\vspace*{-2mm}
    \caption{Adversary NN training process}
    \label{fig:advonly_10_7}
		\end{minipage}
	\end{center}
\end{figure*}

\subsubsection{Training Algorithm and Adversary NNs together}
Next, we show the effectiveness of our training methods when algorithm and adversary NNs are trained together, under the following setting: the budget set is $B = \{2,4,6\}$ and the price set is $A = \{1,3,5,7\}$; maximal sequence length is 7 and total resource number is 3; the pure strategy set of the adversary contains all combinations of 7 budgets from set $B$ and the pure strategy set of the algorithm contains all combinations of 7 prices from set $A$.

By solving the corresponding linear programs in (\ref{NE_alg}) and (\ref{NE_adv}), we obtain that the gap at NE is 3.279. The expected gap values throughout training of the algorithm and the adversary NNs are shown in Fig.~\ref{fig:twonn}, which are calculated using both algorithm and adversary strategies learned. We observe that after about 80k episodes of training, the expected gap approaches the computed gap at NE, further exhibiting the effectiveness of our training methods.  The overall running time for 100k-episode training of two NNs is about 20 minutes.

{
\subsubsection{Training efficiency} 
To better illustrate the convergence behaviour of our method, we further test the algorithm NN and the adversary NN with larger price/budget set and longer sequences. Fig.~\ref{fig:algonly_20_40} and Fig.~\ref{fig:algonly_20_60} present the convergence results of the algorithm NN when the adversary is updated by MWU. The price and budget can be any integers between 1 and 20. The budget sequence for Fig.~\ref{fig:algonly_20_40} is constructed by repeating each 
budget choice from 1 to 20 twice (sequence length = 40), while the budget sequence for Fig.~\ref{fig:algonly_20_60} is constructed in a similar manner by repeating each value in the budget set from low to high for three times (sequence length = 60). The total resource number is set to 10 for both budget sequences. The NE values computed by solving similar linear programs as (\ref{alg_test}) for these two budget sequences are 50.39 and 58.39, respectively. From the training results we can see that given larger budget/price sets and longer sequences, the update method for the algorithm NN can still ensure the convergence to NE. The running times for 100,000 episodes for Fig.~\ref{fig:algonly_20_40} and Fig.~\ref{fig:algonly_20_60} are about 20 minutes and 30 minutes, respectively. As for the adversary NN, we extend its price and budget sets to integers between 1 and 10 (LP with larger sets would be unsolvable due to memory error). The sequence length is 7 and the resource number is 3. Pure strategies/price sequences of the algorithm (updated by MWU) are as follows: $[[1,2,3,4,5,6,7], [2,4,4,6,6,9,10],[1,2,2,2,2,2,3],[6,6,7,7,\\ 8,9,10]]$. The linear program is solved on a server with 256GB RDIMM memory 
and the NE value computed is 19. 
Fig.~\ref{fig:advonly_10_7} shows the training convergence for the above described game between the adversary NN and the MWU-based algorithm, where the convergence took about 16 minutes.

Next, we evaluate the time needed for conducting training on NVIDIA GeForce 1080Ti, and compare the time consumed by our method with other online algorithms during the inference stage. We report the time needed for running one episode in the training process, which starts with sequence sampling according to the opponent's strategy and ends with one update step based on the gradient derived from Section \ref{sec:model}. We also report the time needed to make decisions for the same budget sequence with our algorithm NN and with other online algorithms (aka the inference time) in Table \ref{tab:time}. 
The sequence length is fixed to 100. We observe that the training process would well take hours to complete if the sequence length is long. However, training is done offline. 
On the other hand, the average inference time is less than 0.001s with all methods. Even though decision making with our NN-based method is slower than the other methods, the inference time is still small enough. 
\begin{table*}[t]
    \centering
    \begin{tabular}{ c|c||c|c|c|c|c }
 \hline
  \multicolumn{2}{c||}{Training} & \multicolumn{5}{c}{Inference}\\
  \hline
 \thead{Alg NN } & \thead{Adv NN } & \thead{ NN}& \thead{OPT-Online}& \thead{KP-Threshold} & \thead{Randomized} & \thead{Greedy} \\
 \hline
0.067s  & 0.14s   &0.0008s&   0.00007s & 0.00007s& 0.00006s &0.00004s\\
 \hline
\end{tabular}\caption{Average processing time}
    \label{tab:time}
\end{table*}
}
\vspace{-5mm}

\begin{figure*}[t]
	\captionsetup{width=0.3\textwidth}
	\vspace*{-3mm}
	\begin{center}
		\begin{minipage}{0.32\linewidth}
			\includegraphics[width=\textwidth,valign=t]{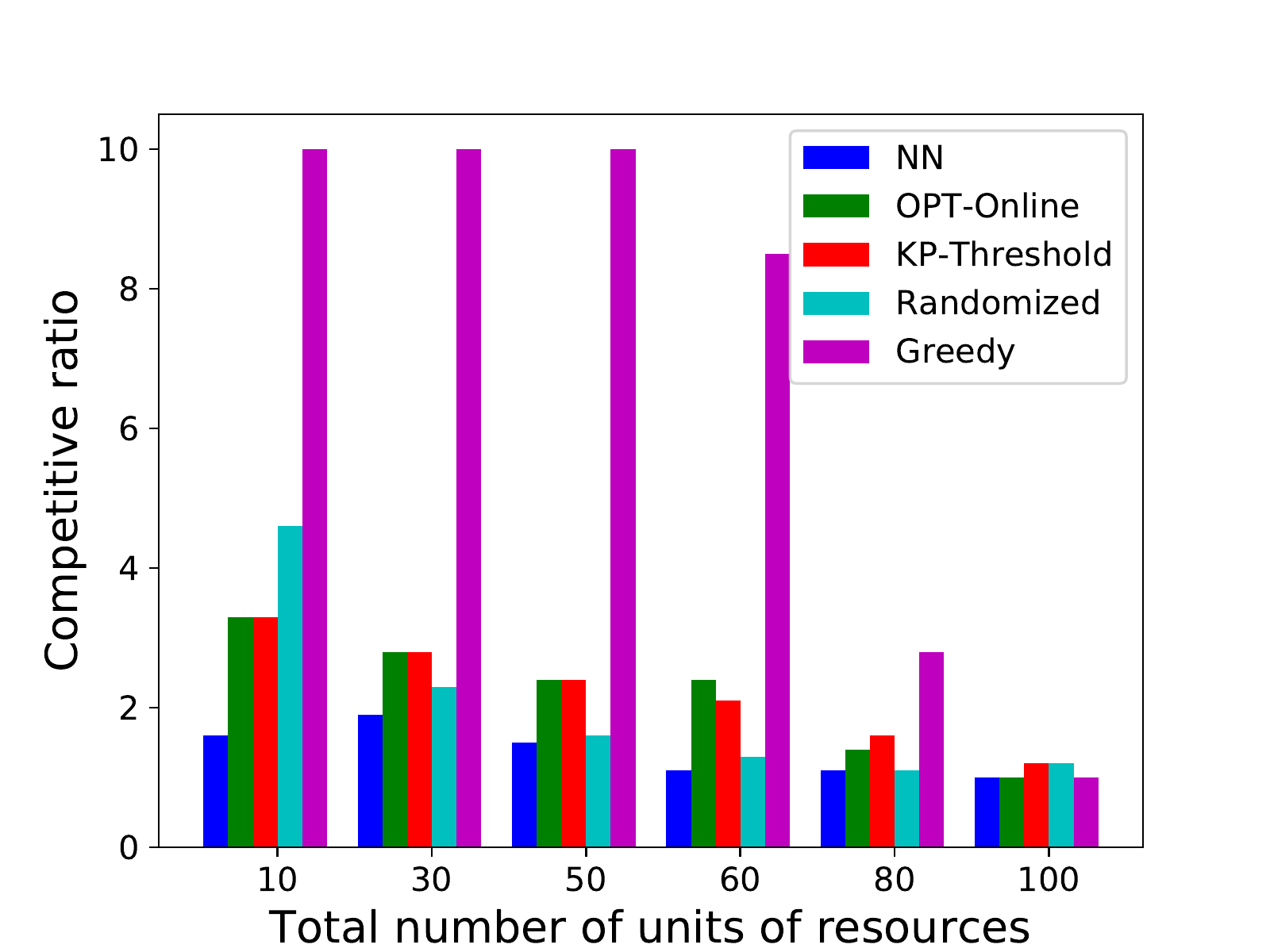}
			\vspace*{-2mm}
			\caption{CR: same sequence length} 
			\label{sim:fig1}
		\end{minipage}
		\begin{minipage}{0.32\linewidth}
			\includegraphics[width=\textwidth,valign=t]{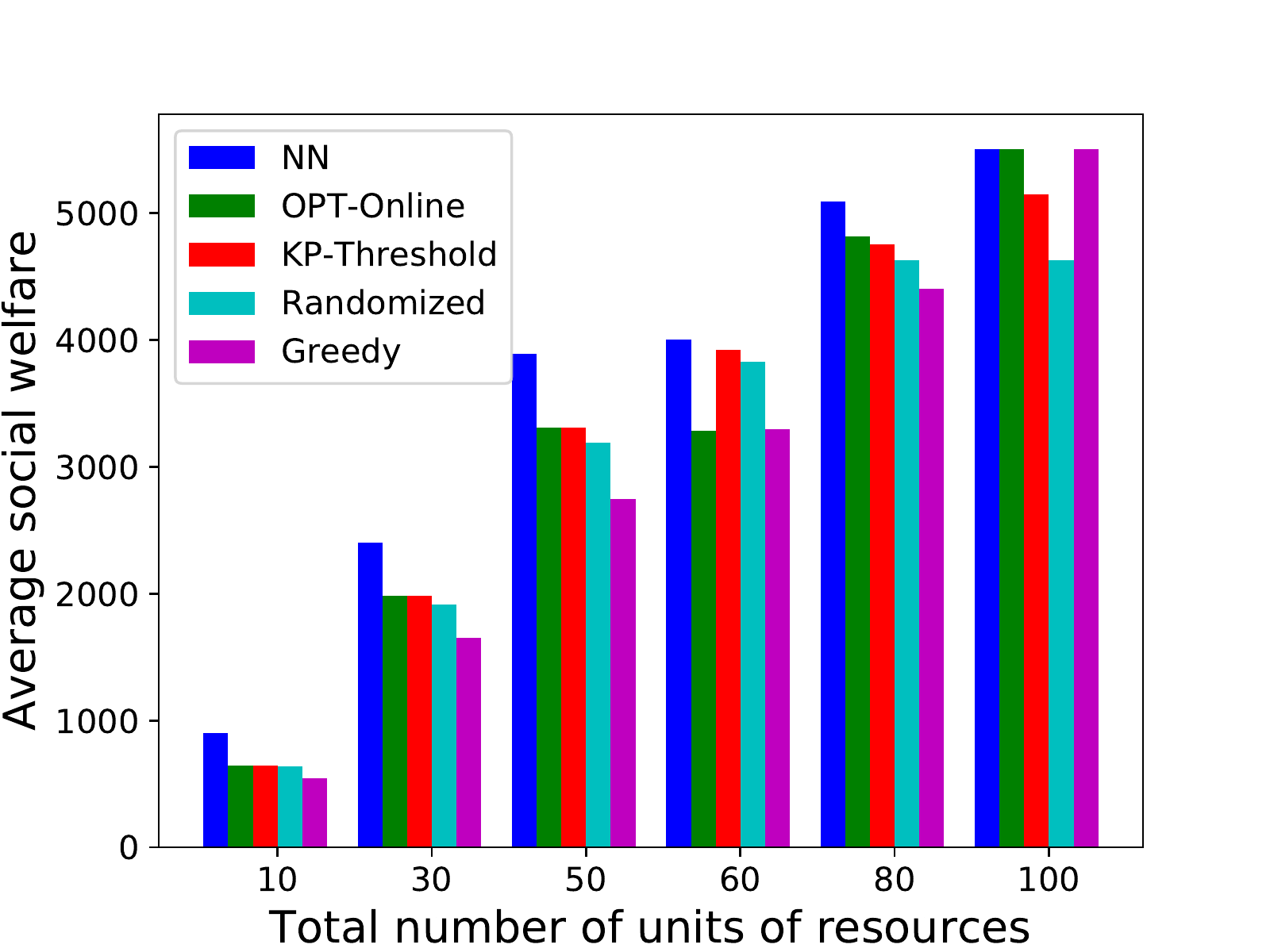}
			\vspace*{-2mm}
			\caption{Social welfare: same sequence length}
			\label{sim:fig2}
		\end{minipage}
		\begin{minipage}{0.32\linewidth}
			\includegraphics[width=\textwidth,valign=t]{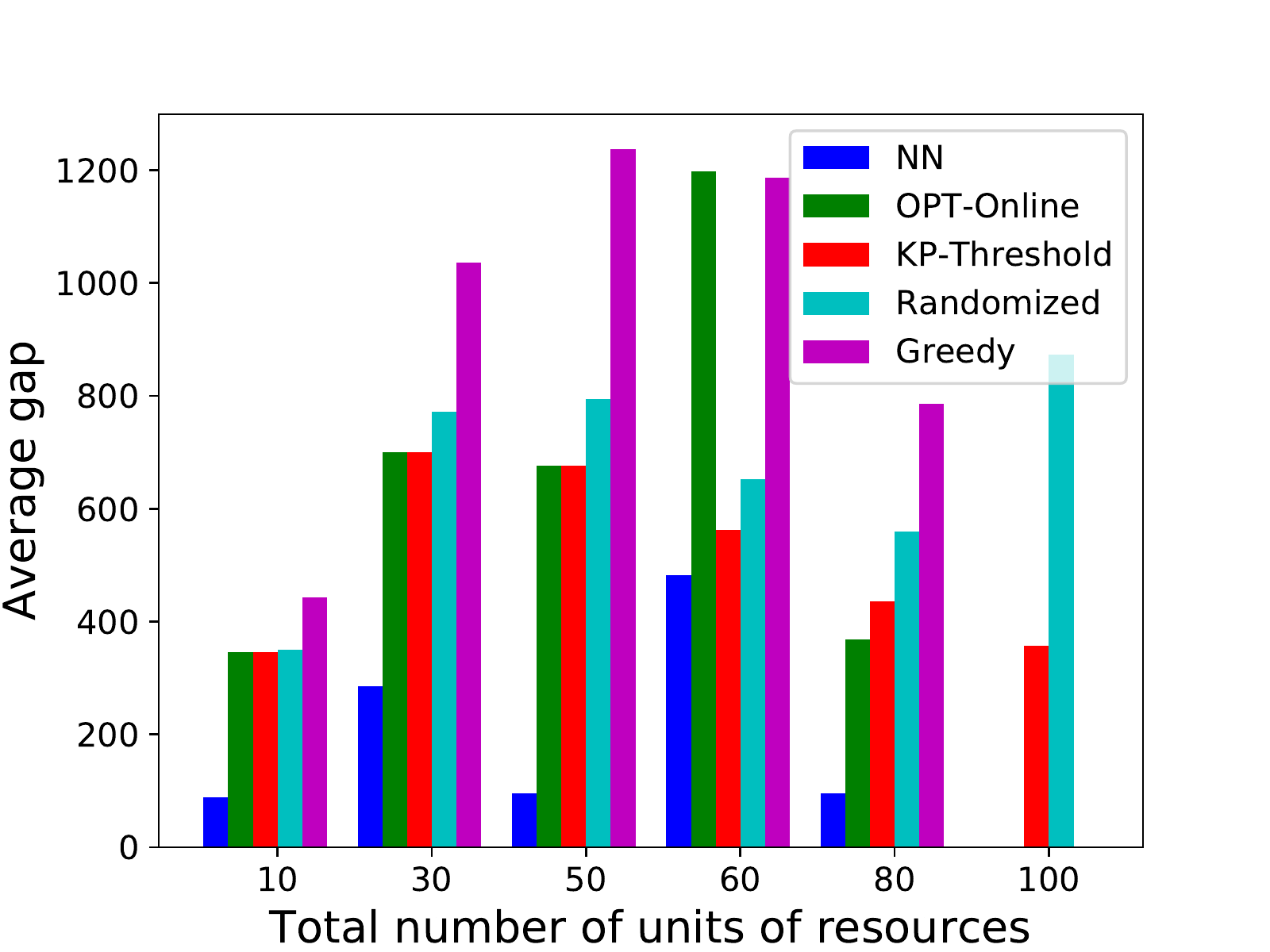}
			\vspace*{-2mm}
			\caption{Gap:same sequence length}
			\label{sim:fig3}
		\end{minipage}
		\end{center}
		\begin{center}
		\begin{minipage}[t]{0.32\linewidth}
			\includegraphics[width=\textwidth]{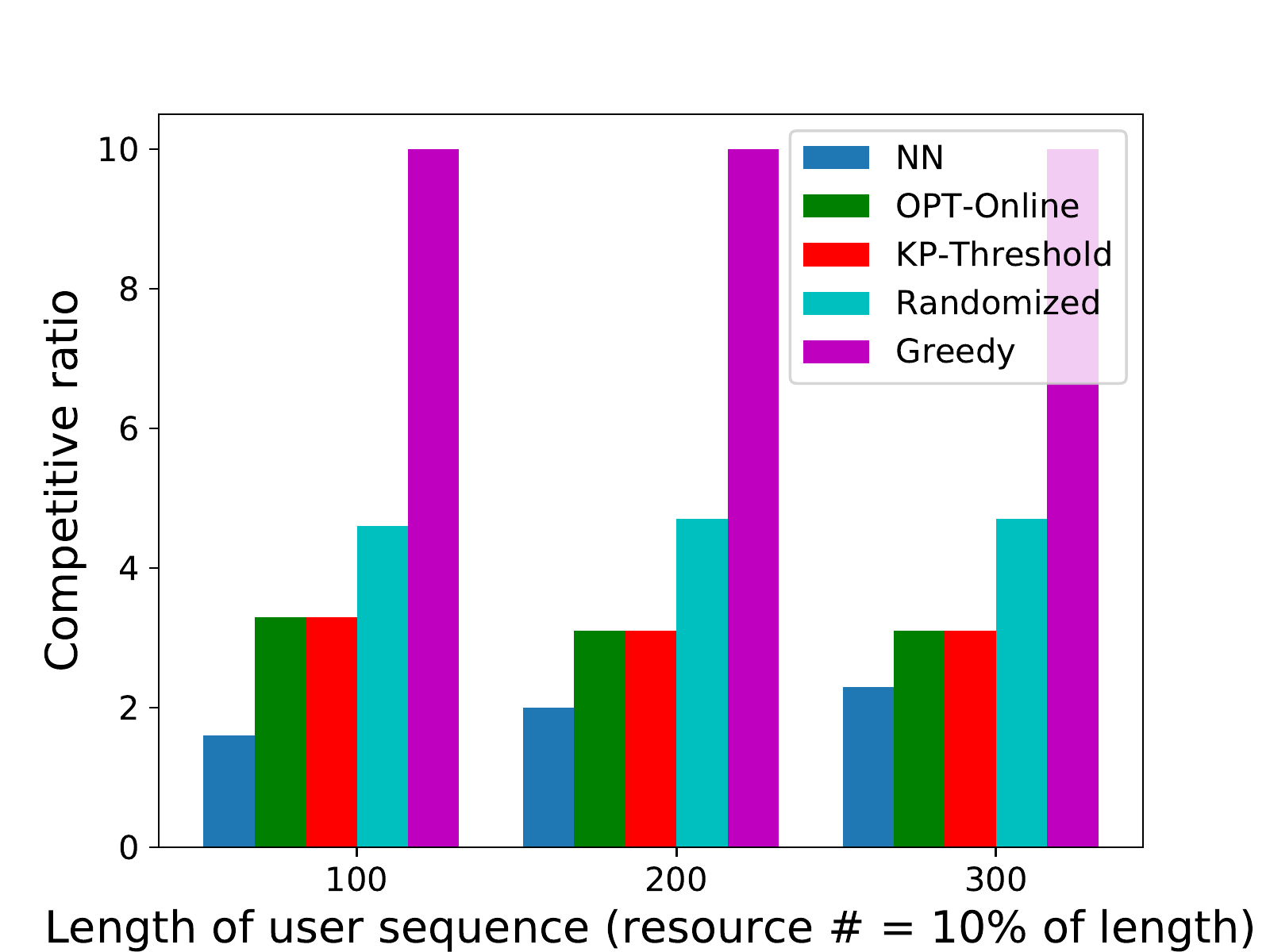}
			\vspace*{-2mm}
			\caption{CR: same supply/demand ratio}
			\label{sim:fig4}
		\end{minipage}
		\begin{minipage}[t]{0.32\linewidth}
			\includegraphics[width=\textwidth]{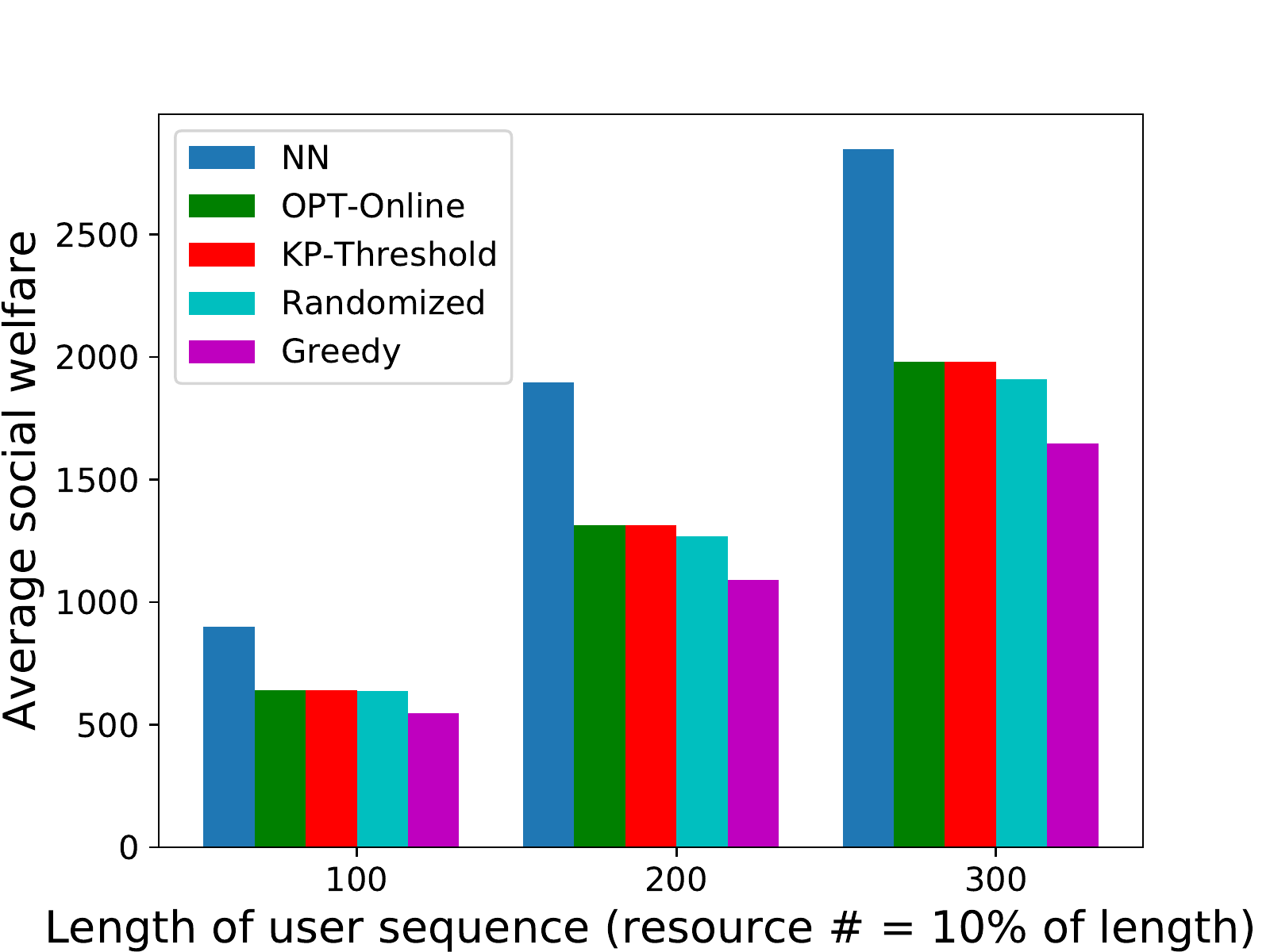}
			\vspace*{-2mm}
			\caption{Social welfare: same supply/demand ratio}
			\label{sim:fig5}
		\end{minipage}
		\begin{minipage}[t]{0.32\linewidth}
		\includegraphics[width=\textwidth]{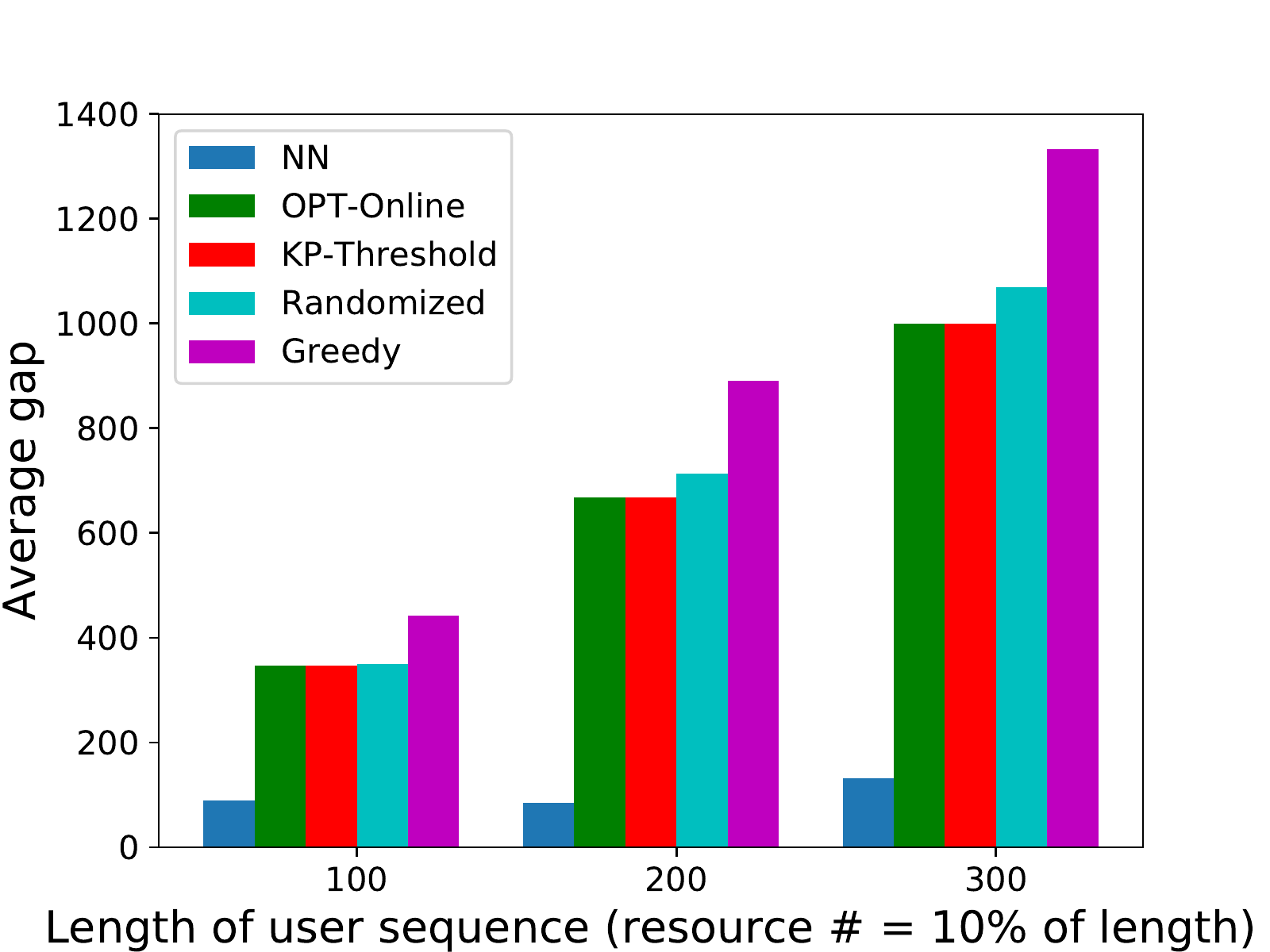}
		\vspace*{-2mm}
		\caption{Gap: same supply/demand ratio}
		\label{sim:fig6}
	\end{minipage}
	\end{center}
	\vspace*{-6mm}
\end{figure*}

\subsection{Comparison with existing online algorithms}

To evaluate the pricing strategy learned by our algorithm NN, we compare its performance with state-of-the-art online algorithms. 
Let $U$ and $L$ be upper bound and lower bound of the user budget per unit of resource, and $z$ denote the fraction of used resource.
The baseline algorithms we compare with are as follows: (1) \textbf{OPT-Online} \cite{zhang2017optimal}, which is a deterministic online algorithm for single-type, non-recycled resource pricing and allocation, obtaining optimal competitive ratio based on two assumptions (see Section~\ref{sec:relatedwork}). Its pricing strategy is based on the relationship between total resource demand and resource supply and uses $U, L, z$; so does its competitive ratio.  The optimality proofs of \textbf{OPT-Online} rely on continuous price/budget while the price/budget set in our setting is discrete. When budgets are discrete, some important properties, such as the largest accepted budget by the offline optimum equals the largest accepted budget by the online algorithm minus $\epsilon$ does not hold any more, which indicates that the gap between offline optimal and online algorithm would be smaller when price/budget set is discrete. We investigate a discrete price set to facilitate the learning algorithm design and leave the continuous price function case as our future work. (2) \textbf{KP-Threshold} \cite{zhou2008budget}, which uses price function $\psi(z)= (Ue/L)^z(L/e)$ ($e$ is the base of natural logarithm) to decide which job to accept. (3) \textbf{Randomized} algorithm (mentioned in \cite{zhou2008budget}), with a competitive ratio of $\mathcal{O}(\log_2 (U/L))$: for each item, this algorithm samples an integer $i$ uniformly randomly from $[0, \log_2(U/L)]$; if the ratio of user budget per unit resource over $L$ is no smaller than $2^i$, the user is accepted. 
 (4) \textbf{Greedy} algorithm, which accepts every arrived user until all resources have been allocated, with a competitive ratio of $U/L$.

In our following evaluation, all results are averaged over 1000 sequences. NN parameters of the last 1k training episodes are saved and uniformly randomly chosen for evaluation. 
The price/budget set is $[10,20,30,40,50,60,70,80, 90,100]$. $s$ in Algorithm \ref{alg_train} is set between 32 and 64. 

We first compare the competitive ratio achieved by our learned algorithm with the benchmark algorithms. 
 Competitive ratio is computed as the ratio of the offline optimum and performance of the online algorithm under the worst-case user request (aka budget) sequence, for each algorithm. For our algorithm, the worst-case budget sequence is provided by the adversary NN. For \textbf{OPT-Online} and \textbf{KP-Threshold}, given the final resource utilization level, we can calculate the largest posted price $p^*$ of the respective algorithm; we set the budget values for accepted user as close as possible to the price that online algorithm posted upon its arrival, 
  so that the objective value of online algorithm can be minimized, and budgets of other users to $p^* - \epsilon$ so that the objective value of offline optimum can be maximized \cite{zhang2017optimal}.  
 For the \textbf{Randomized} online algorithm, the worst-case sequence contains gradually increased budgets. 
 For the \textbf{Greedy} algorithm, the worst-case sequence starts with the lowest budget value while after resources have been used up, highest budget value will follow. 

Fig.~\ref{sim:fig1} shows the competitive ratio when the user sequence length is 100.  The total resource supply varies, representing different relationships between resource demand and supply \cite{zhang2017optimal}. 
We can see that our algorithm always achieves the smallest competitive ratio. 
Especially when the resource is more scarce, our competitive ratio is smaller - almost half of those of other algorithms. 
Fig.~\ref{sim:fig4} compares the competitive ratios when the user sequence length varies while the ratio between total resource supply and total resource demand is fixed to 10\%. Our algorithm outperforms baselines with lower competitive ratios. 

Fig.~\ref{sim:fig2} and Fig.~\ref{sim:fig3} compare the social welfare and gap achieved by the algorithms under uniformly randomly generated budget sequences of length 100. Our algorithm achieves the best social welfare and smallest gap to offline optimum. In the case that resource number is no smaller than the user number, we see the gap achieved by our algorithm is zero, consistent with our analysis in Sec. 7. 

Fig.~\ref{sim:fig5} and Fig.~\ref{sim:fig6} further compare the performance when the user sequence length varies and the ratio between total resource supply and total resource demand is fixed (10\%). Similar observations can be made.

\subsection{Comparison with reinforcement learning}
We next show the superior convergence performance of our method over the reinforcement learning algorithm. The budget and price sets are both $[1,2,3]$ in this experiment, and the resource number is set to 5. For a given budget sequence (each budget is chosen uniformly at random from the budget set), we train two NNs with the same architecture by our update method and the REINFORCE update method, separately. REINFORCE is a commonly used reinforcement learning algorithm for sequential decision making in the deep learning literature \cite{yu2017seqgan}\cite{kool2018attention}, which updates sampled price values by gradient descent according to the average performance gap of complete price sequences, which are formed by sampled price values and following RL-learned strategy for un-sampled slots. In Fig.~\ref{sim:rl1}, the sequence length is 5; the exploration strategy for RL is a simple $\epsilon$-greedy strategy ($\epsilon$ = 0.3), that is, with probability $\epsilon$, the sampled price according to the RL strategy will be replaced by a random value from the price set to encourage exploration. We can see that our method outperforms RL in both solution quality and convergence speed. Next, we equip RL with an improved exploration strategy and keep the sequence length at 5. Instead of replacing a sampled price with a random price from the price set, we replace it with the optimal price with probability 0.3 in the training process, to guide the sample process to better decisions. Fig.~\ref{sim:rl2} plots the training process in this scenario. We can see that RL heavily relies on a good exploration strategy and performs better when the guided exploration strategy is applied, but our method is still better than RL. Lastly, we compare RL plus the same guided exploration with our method with a longer sequence (sequence length = 10). Fig.~\ref{sim:rl3} shows the training process. We can see that when the sequence becomes longer, training convergence becomes harder for RL due to its larger exploration space and the difficulty to break the correlation between steps.

\begin{figure*}[t]
	\captionsetup{width=0.3\textwidth}
	\vspace*{-3mm}
	\begin{center}
		\begin{minipage}[t]{0.32\linewidth}
			\includegraphics[width=\textwidth]{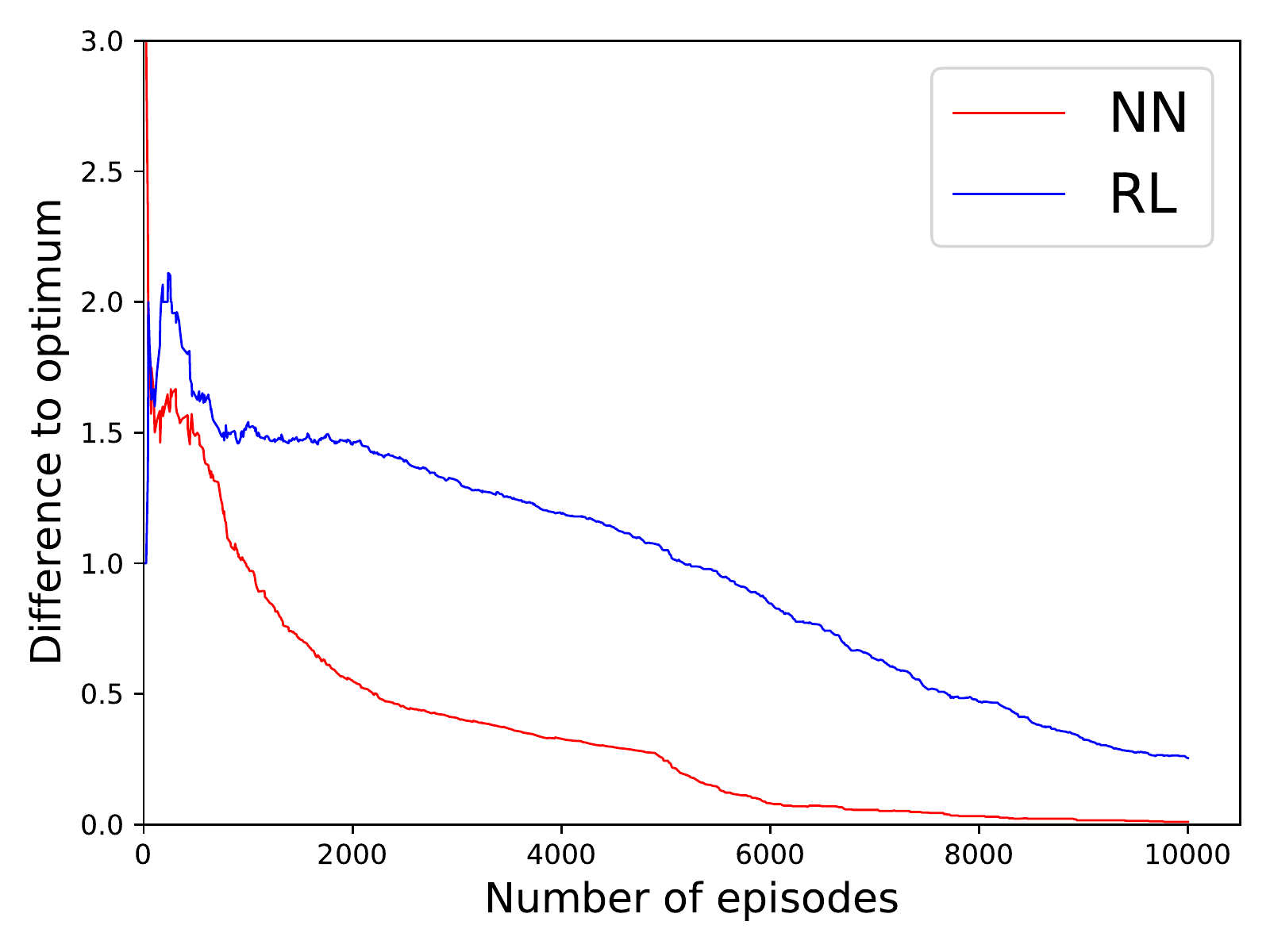}
			\vspace*{-2mm}
			\caption{$\epsilon$-greedy exploration} 
			\label{sim:rl1}
		\end{minipage}
		\begin{minipage}[t]{0.32\linewidth}
			\includegraphics[width=\textwidth]{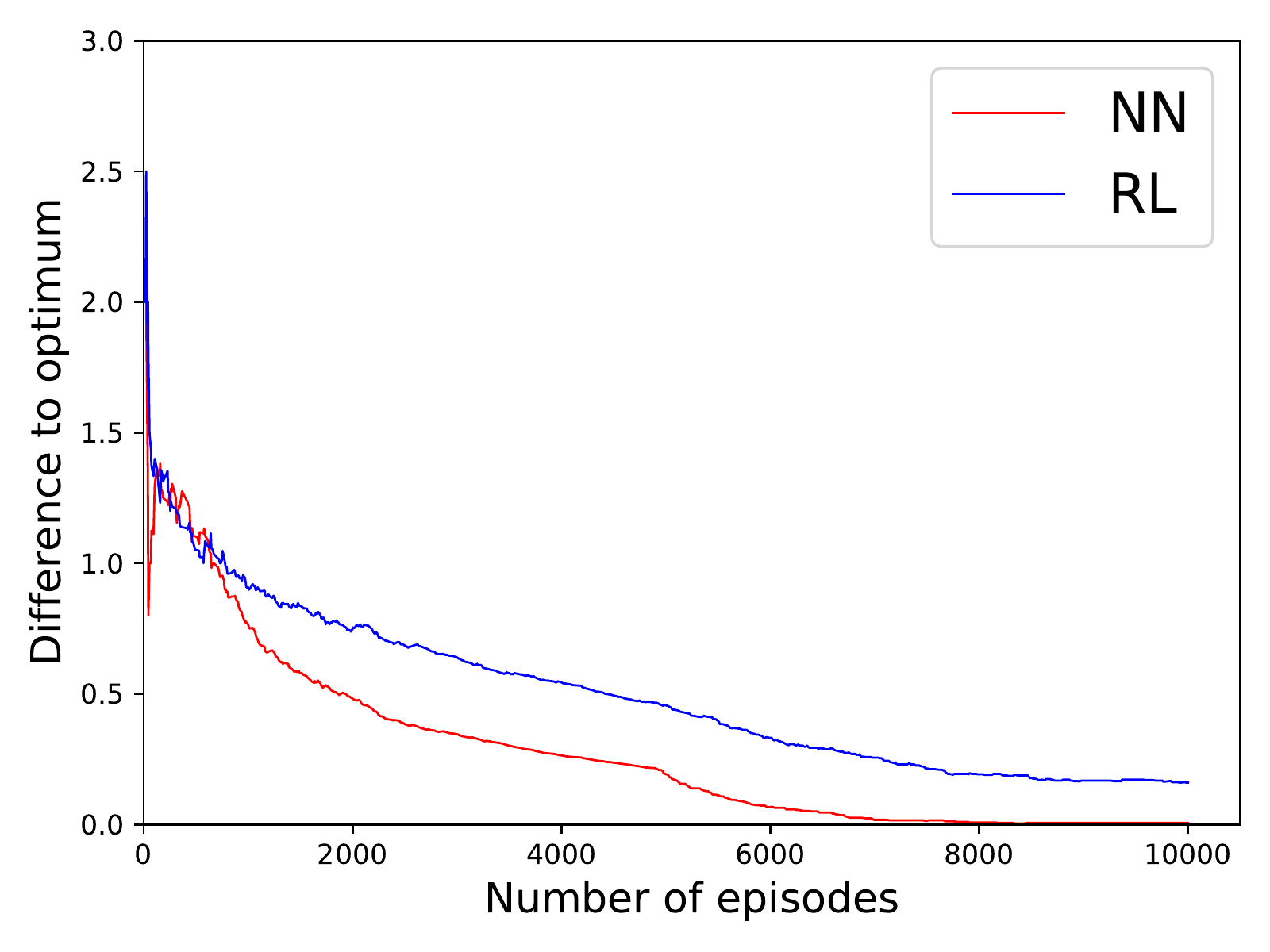}
			\vspace*{-2mm}
			\caption{Guided exploration with sequence length = 5}
			\label{sim:rl2}
		\end{minipage}
		\begin{minipage}[t]{0.32\linewidth}
			\includegraphics[width=\textwidth]{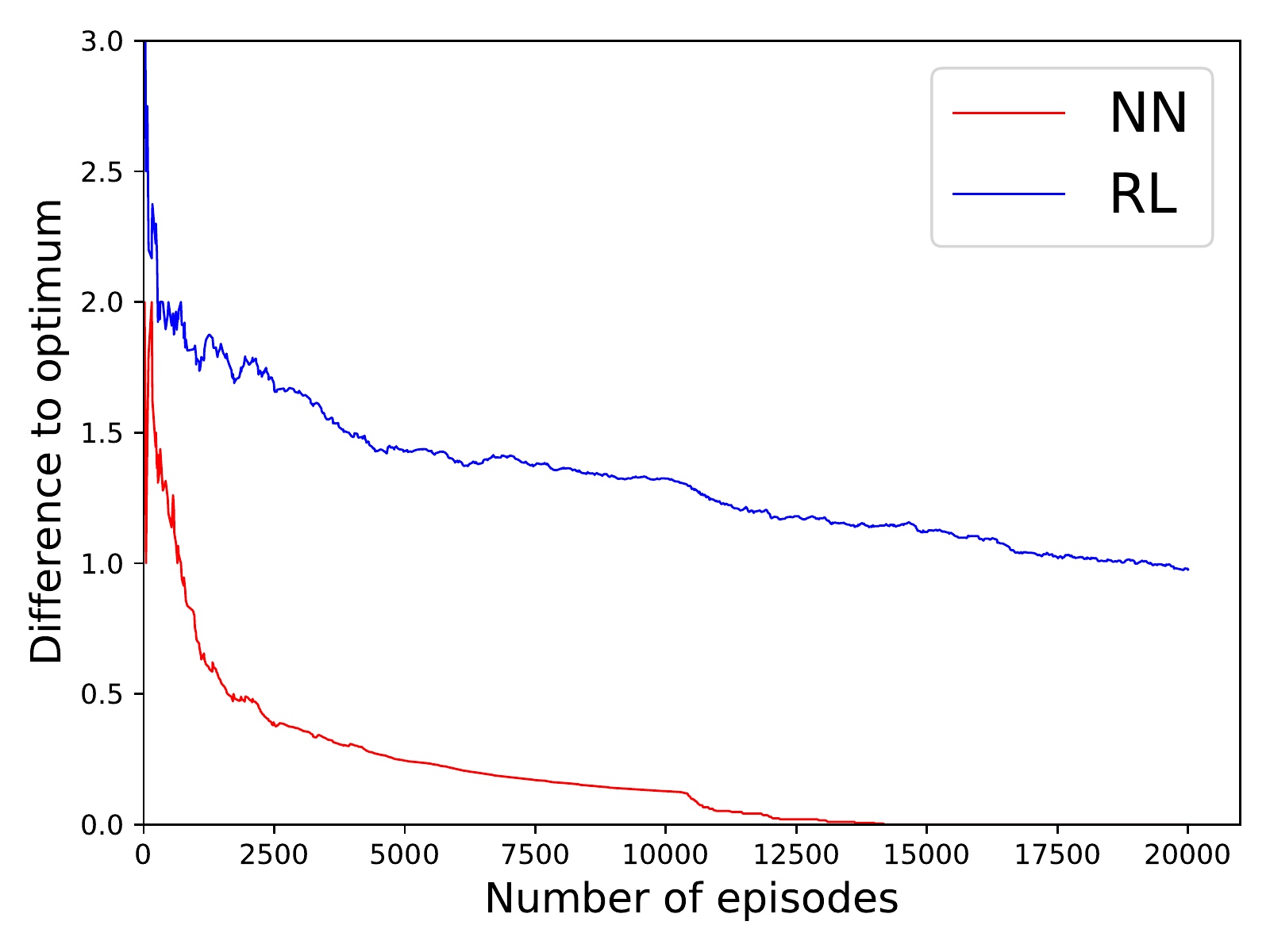}
			\vspace*{-2mm}
			\caption{Guided exploration with sequence length = 10}
			\label{sim:rl3}
		\end{minipage}
	\end{center}
	\vspace*{-6mm}
\end{figure*}
\section{Conclusion}
\label{conclu}

Our work presents the first attempt to design online algorithm addressing the worst-case input through NNs, with the case of a classic online problem. The randomized strategy, our novel per-round update method and the powerful learning ability of NNs enable better online algorithms, as shown by our evaluation results. 
We also provide empirical evidence showing that our methods ensure 
convergence to NE. 
As future work, we are working on extending our method to algorithm design of other online problems.
\section{Acknowledgement}
This work was supported in part by grants from Hong Kong RGC under the contracts HKU 17204619, 17208920 and C5026-18G (CRF).

\bibliographystyle{ACM-Reference-Format}
\bibliography{ref}


\begin{thebibliography}{45}


\ifx \showCODEN    \undefined \def \showCODEN     #1{\unskip}     \fi
\ifx \showDOI      \undefined \def \showDOI       #1{#1}\fi
\ifx \showISBNx    \undefined \def \showISBNx     #1{\unskip}     \fi
\ifx \showISBNxiii \undefined \def \showISBNxiii  #1{\unskip}     \fi
\ifx \showISSN     \undefined \def \showISSN      #1{\unskip}     \fi
\ifx \showLCCN     \undefined \def \showLCCN      #1{\unskip}     \fi
\ifx \shownote     \undefined \def \shownote      #1{#1}          \fi
\ifx \showarticletitle \undefined \def \showarticletitle #1{#1}   \fi
\ifx \showURL      \undefined \def \showURL       {\relax}        \fi
\providecommand\bibfield[2]{#2}
\providecommand\bibinfo[2]{#2}
\providecommand\natexlab[1]{#1}
\providecommand\showeprint[2][]{arXiv:#2}

\bibitem[\protect\citeauthoryear{Arora, Hazan, and Kale}{Arora
  et~al\mbox{.}}{2012}]%
        {arora2012multiplicative}
\bibfield{author}{\bibinfo{person}{Sanjeev Arora}, \bibinfo{person}{Elad
  Hazan}, {and} \bibinfo{person}{Satyen Kale}.}
  \bibinfo{year}{2012}\natexlab{}.
\newblock \showarticletitle{The multiplicative weights update method: a
  meta-algorithm and applications}.
\newblock \bibinfo{journal}{\emph{Theory of Computing}} \bibinfo{volume}{8},
  \bibinfo{number}{1} (\bibinfo{year}{2012}), \bibinfo{pages}{121--164}.
\newblock


\bibitem[\protect\citeauthoryear{Arulkumaran, Deisenroth, Brundage, and
  Bharath}{Arulkumaran et~al\mbox{.}}{2017}]%
        {arulkumaran2017brief}
\bibfield{author}{\bibinfo{person}{Kai Arulkumaran},
  \bibinfo{person}{Marc~Peter Deisenroth}, \bibinfo{person}{Miles Brundage},
  {and} \bibinfo{person}{Anil~Anthony Bharath}.}
  \bibinfo{year}{2017}\natexlab{}.
\newblock \showarticletitle{A brief survey of deep reinforcement learning}.
\newblock \bibinfo{journal}{\emph{arXiv preprint arXiv:1708.05866}}
  (\bibinfo{year}{2017}).
\newblock


\bibitem[\protect\citeauthoryear{Banerjee, Gurvich, and Vera}{Banerjee
  et~al\mbox{.}}{2020}]%
        {banerjee2020constant}
\bibfield{author}{\bibinfo{person}{Siddhartha Banerjee}, \bibinfo{person}{Itai
  Gurvich}, {and} \bibinfo{person}{Alberto Vera}.}
  \bibinfo{year}{2020}\natexlab{}.
\newblock \showarticletitle{Constant Regret in Online Allocation: On the
  Sufficiency of a Single Historical Trace}.
\newblock  (\bibinfo{year}{2020}).
\newblock


\bibitem[\protect\citeauthoryear{Bello, Pham, Le, Norouzi, and Bengio}{Bello
  et~al\mbox{.}}{2016}]%
        {bello2016neural}
\bibfield{author}{\bibinfo{person}{Irwan Bello}, \bibinfo{person}{Hieu Pham},
  \bibinfo{person}{Quoc~V Le}, \bibinfo{person}{Mohammad Norouzi}, {and}
  \bibinfo{person}{Samy Bengio}.} \bibinfo{year}{2016}\natexlab{}.
\newblock \showarticletitle{Neural combinatorial optimization with
  reinforcement learning}.
\newblock \bibinfo{journal}{\emph{arXiv preprint arXiv:1611.09940}}
  (\bibinfo{year}{2016}).
\newblock


\bibitem[\protect\citeauthoryear{Ben-David, Borodin, Karp, Tardos, and
  Wigderson}{Ben-David et~al\mbox{.}}{1994}]%
        {ben1994power}
\bibfield{author}{\bibinfo{person}{Shai Ben-David}, \bibinfo{person}{Allan
  Borodin}, \bibinfo{person}{Richard Karp}, \bibinfo{person}{Gabor Tardos},
  {and} \bibinfo{person}{Avi Wigderson}.} \bibinfo{year}{1994}\natexlab{}.
\newblock \showarticletitle{On the power of randomization in on-line
  algorithms}.
\newblock \bibinfo{journal}{\emph{Algorithmica}} \bibinfo{volume}{11},
  \bibinfo{number}{1} (\bibinfo{year}{1994}), \bibinfo{pages}{2--14}.
\newblock


\bibitem[\protect\citeauthoryear{Brock, Donahue, and Simonyan}{Brock
  et~al\mbox{.}}{2018}]%
        {brock2018large}
\bibfield{author}{\bibinfo{person}{Andrew Brock}, \bibinfo{person}{Jeff
  Donahue}, {and} \bibinfo{person}{Karen Simonyan}.}
  \bibinfo{year}{2018}\natexlab{}.
\newblock \showarticletitle{Large scale gan training for high fidelity natural
  image synthesis}.
\newblock \bibinfo{journal}{\emph{arXiv preprint arXiv:1809.11096}}
  (\bibinfo{year}{2018}).
\newblock


\bibitem[\protect\citeauthoryear{Buchbinder, Jain, and Naor}{Buchbinder
  et~al\mbox{.}}{2007}]%
        {buchbinder2007online}
\bibfield{author}{\bibinfo{person}{Niv Buchbinder}, \bibinfo{person}{Kamal
  Jain}, {and} \bibinfo{person}{Joseph~Seffi Naor}.}
  \bibinfo{year}{2007}\natexlab{}.
\newblock \showarticletitle{Online primal-dual algorithms for maximizing
  ad-auctions revenue}. In \bibinfo{booktitle}{\emph{European Symposium on
  Algorithms}}. Springer, \bibinfo{pages}{253--264}.
\newblock


\bibitem[\protect\citeauthoryear{Buchbinder and Naor}{Buchbinder and
  Naor}{2009}]%
        {buchbinder2009online}
\bibfield{author}{\bibinfo{person}{Niv Buchbinder} {and}
  \bibinfo{person}{Joseph Naor}.} \bibinfo{year}{2009}\natexlab{}.
\newblock \showarticletitle{Online primal-dual algorithms for covering and
  packing}.
\newblock \bibinfo{journal}{\emph{Mathematics of Operations Research}}
  \bibinfo{volume}{34}, \bibinfo{number}{2} (\bibinfo{year}{2009}),
  \bibinfo{pages}{270--286}.
\newblock


\bibitem[\protect\citeauthoryear{Buchbinder, Naor, et~al\mbox{.}}{Buchbinder
  et~al\mbox{.}}{2009}]%
        {buchbinder2009design}
\bibfield{author}{\bibinfo{person}{Niv Buchbinder},
  \bibinfo{person}{Joseph~Seffi Naor}, {et~al\mbox{.}}}
  \bibinfo{year}{2009}\natexlab{}.
\newblock \showarticletitle{The design of competitive online algorithms via a
  primal--dual approach}.
\newblock \bibinfo{journal}{\emph{Foundations and Trends{\textregistered} in
  Theoretical Computer Science}} \bibinfo{volume}{3}, \bibinfo{number}{2--3}
  (\bibinfo{year}{2009}), \bibinfo{pages}{93--263}.
\newblock


\bibitem[\protect\citeauthoryear{Celli, Ciccone, Bongo, and Gatti}{Celli
  et~al\mbox{.}}{2019}]%
        {celli2019coordination}
\bibfield{author}{\bibinfo{person}{Andrea Celli}, \bibinfo{person}{Marco
  Ciccone}, \bibinfo{person}{Raffaele Bongo}, {and} \bibinfo{person}{Nicola
  Gatti}.} \bibinfo{year}{2019}\natexlab{}.
\newblock \showarticletitle{Coordination in Adversarial Sequential Team Games
  via Multi-Agent Deep Reinforcement Learning}.
\newblock \bibinfo{journal}{\emph{arXiv preprint arXiv:1912.07712}}
  (\bibinfo{year}{2019}).
\newblock


\bibitem[\protect\citeauthoryear{Du, Wu, and Huang}{Du et~al\mbox{.}}{2019}]%
        {du2019learning}
\bibfield{author}{\bibinfo{person}{Bingqian Du}, \bibinfo{person}{Chuan Wu},
  {and} \bibinfo{person}{Zhiyi Huang}.} \bibinfo{year}{2019}\natexlab{}.
\newblock \showarticletitle{Learning Resource Allocation and Pricing for Cloud
  Profit Maximization}. In \bibinfo{booktitle}{\emph{The Thirty-Third AAAI
  Conference on Artificial Intelligence (AAAI-19)}}.
\newblock


\bibitem[\protect\citeauthoryear{Gollapudi and Panigrahi}{Gollapudi and
  Panigrahi}{2019}]%
        {gollapudi2019online}
\bibfield{author}{\bibinfo{person}{Sreenivas Gollapudi} {and}
  \bibinfo{person}{Debmalya Panigrahi}.} \bibinfo{year}{2019}\natexlab{}.
\newblock \showarticletitle{Online algorithms for rent-or-buy with expert
  advice}. In \bibinfo{booktitle}{\emph{International Conference on Machine
  Learning}}. PMLR, \bibinfo{pages}{2319--2327}.
\newblock


\bibitem[\protect\citeauthoryear{Goodfellow}{Goodfellow}{2016}]%
        {goodfellow2016nips}
\bibfield{author}{\bibinfo{person}{Ian Goodfellow}.}
  \bibinfo{year}{2016}\natexlab{}.
\newblock \showarticletitle{NIPS 2016 tutorial: Generative adversarial
  networks}.
\newblock \bibinfo{journal}{\emph{arXiv preprint arXiv:1701.00160}}
  (\bibinfo{year}{2016}).
\newblock


\bibitem[\protect\citeauthoryear{Goodfellow, Pouget-Abadie, Mirza, Xu,
  Warde-Farley, Ozair, Courville, and Bengio}{Goodfellow et~al\mbox{.}}{2014}]%
        {goodfellow2014generative}
\bibfield{author}{\bibinfo{person}{Ian Goodfellow}, \bibinfo{person}{Jean
  Pouget-Abadie}, \bibinfo{person}{Mehdi Mirza}, \bibinfo{person}{Bing Xu},
  \bibinfo{person}{David Warde-Farley}, \bibinfo{person}{Sherjil Ozair},
  \bibinfo{person}{Aaron Courville}, {and} \bibinfo{person}{Yoshua Bengio}.}
  \bibinfo{year}{2014}\natexlab{}.
\newblock \showarticletitle{Generative adversarial nets}. In
  \bibinfo{booktitle}{\emph{Advances in neural information processing
  systems}}. \bibinfo{pages}{2672--2680}.
\newblock


\bibitem[\protect\citeauthoryear{Jansen and Solis-Oba}{Jansen and
  Solis-Oba}{2011}]%
        {jansen2011approximation}
\bibfield{author}{\bibinfo{person}{Klaus Jansen} {and} \bibinfo{person}{Roberto
  Solis-Oba}.} \bibinfo{year}{2011}\natexlab{}.
\newblock \bibinfo{booktitle}{\emph{Approximation and Online Algorithms: 8th
  International Workshop, WAOA 2010, Liverpool, UK, September 9-10, 2010,
  Revised Papers}}. Vol.~\bibinfo{volume}{6534}.
\newblock \bibinfo{publisher}{Springer}.
\newblock


\bibitem[\protect\citeauthoryear{Karalias and Loukas}{Karalias and
  Loukas}{2020}]%
        {karalias2020erdos}
\bibfield{author}{\bibinfo{person}{Nikolaos Karalias} {and}
  \bibinfo{person}{Andreas Loukas}.} \bibinfo{year}{2020}\natexlab{}.
\newblock \showarticletitle{Erdos goes neural: an unsupervised learning
  framework for combinatorial optimization on graphs}.
\newblock \bibinfo{journal}{\emph{arXiv preprint arXiv:2006.10643}}
  (\bibinfo{year}{2020}).
\newblock


\bibitem[\protect\citeauthoryear{Kellerer, Pferschy, and Pisinger}{Kellerer
  et~al\mbox{.}}{2004}]%
        {kellerer2004multidimensional}
\bibfield{author}{\bibinfo{person}{Hans Kellerer}, \bibinfo{person}{Ulrich
  Pferschy}, {and} \bibinfo{person}{David Pisinger}.}
  \bibinfo{year}{2004}\natexlab{}.
\newblock \showarticletitle{Multidimensional knapsack problems}.
\newblock In \bibinfo{booktitle}{\emph{Knapsack problems}}.
  \bibinfo{publisher}{Springer}, \bibinfo{pages}{235--283}.
\newblock


\bibitem[\protect\citeauthoryear{Khalil, Dai, Zhang, Dilkina, and Song}{Khalil
  et~al\mbox{.}}{2017}]%
        {khalil2017learning}
\bibfield{author}{\bibinfo{person}{Elias Khalil}, \bibinfo{person}{Hanjun Dai},
  \bibinfo{person}{Yuyu Zhang}, \bibinfo{person}{Bistra Dilkina}, {and}
  \bibinfo{person}{Le Song}.} \bibinfo{year}{2017}\natexlab{}.
\newblock \showarticletitle{Learning combinatorial optimization algorithms over
  graphs}. In \bibinfo{booktitle}{\emph{Advances in neural information
  processing systems}}. \bibinfo{pages}{6348--6358}.
\newblock


\bibitem[\protect\citeauthoryear{Kool, Van~Hoof, and Welling}{Kool
  et~al\mbox{.}}{2018}]%
        {kool2018attention}
\bibfield{author}{\bibinfo{person}{Wouter Kool}, \bibinfo{person}{Herke
  Van~Hoof}, {and} \bibinfo{person}{Max Welling}.}
  \bibinfo{year}{2018}\natexlab{}.
\newblock \showarticletitle{Attention, learn to solve routing problems!}
\newblock \bibinfo{journal}{\emph{arXiv preprint arXiv:1803.08475}}
  (\bibinfo{year}{2018}).
\newblock


\bibitem[\protect\citeauthoryear{Kumar, Purohit, and Svitkina}{Kumar
  et~al\mbox{.}}{2018}]%
        {kumar2018improving}
\bibfield{author}{\bibinfo{person}{Ravi Kumar}, \bibinfo{person}{Manish
  Purohit}, {and} \bibinfo{person}{Zoya Svitkina}.}
  \bibinfo{year}{2018}\natexlab{}.
\newblock \showarticletitle{Improving online algorithms via ML predictions}. In
  \bibinfo{booktitle}{\emph{Proceedings of the 32nd International Conference on
  Neural Information Processing Systems}}. \bibinfo{pages}{9684--9693}.
\newblock


\bibitem[\protect\citeauthoryear{Letcher, Balduzzi, Racaniere, Martens,
  Foerster, Tuyls, and Graepel}{Letcher et~al\mbox{.}}{2019}]%
        {letcher2019differentiable}
\bibfield{author}{\bibinfo{person}{Alistair Letcher}, \bibinfo{person}{David
  Balduzzi}, \bibinfo{person}{S{\'e}bastien Racaniere}, \bibinfo{person}{James
  Martens}, \bibinfo{person}{Jakob Foerster}, \bibinfo{person}{Karl Tuyls},
  {and} \bibinfo{person}{Thore Graepel}.} \bibinfo{year}{2019}\natexlab{}.
\newblock \showarticletitle{Differentiable game mechanics}.
\newblock \bibinfo{journal}{\emph{The Journal of Machine Learning Research}}
  \bibinfo{volume}{20}, \bibinfo{number}{1} (\bibinfo{year}{2019}),
  \bibinfo{pages}{3032--3071}.
\newblock


\bibitem[\protect\citeauthoryear{Li, Chen, and Koltun}{Li
  et~al\mbox{.}}{2018}]%
        {li2018combinatorial}
\bibfield{author}{\bibinfo{person}{Zhuwen Li}, \bibinfo{person}{Qifeng Chen},
  {and} \bibinfo{person}{Vladlen Koltun}.} \bibinfo{year}{2018}\natexlab{}.
\newblock \showarticletitle{Combinatorial optimization with graph convolutional
  networks and guided tree search}. In \bibinfo{booktitle}{\emph{Advances in
  Neural Information Processing Systems}}. \bibinfo{pages}{539--548}.
\newblock


\bibitem[\protect\citeauthoryear{Liang and Stokes}{Liang and Stokes}{2019}]%
        {liang2019interaction}
\bibfield{author}{\bibinfo{person}{Tengyuan Liang} {and} \bibinfo{person}{James
  Stokes}.} \bibinfo{year}{2019}\natexlab{}.
\newblock \showarticletitle{Interaction matters: A note on non-asymptotic local
  convergence of generative adversarial networks}. In
  \bibinfo{booktitle}{\emph{The 22nd International Conference on Artificial
  Intelligence and Statistics}}. PMLR, \bibinfo{pages}{907--915}.
\newblock


\bibitem[\protect\citeauthoryear{Luo, Yang, and Liu}{Luo et~al\mbox{.}}{2020}]%
        {luo2020policy}
\bibfield{author}{\bibinfo{person}{Biao Luo}, \bibinfo{person}{Yin Yang}, {and}
  \bibinfo{person}{Derong Liu}.} \bibinfo{year}{2020}\natexlab{}.
\newblock \showarticletitle{Policy Iteration Q-Learning for Data-Based
  Two-Player Zero-Sum Game of Linear Discrete-Time Systems}.
\newblock \bibinfo{journal}{\emph{IEEE Transactions on Cybernetics}}
  (\bibinfo{year}{2020}).
\newblock


\bibitem[\protect\citeauthoryear{Lykouris and Vassilvtiskii}{Lykouris and
  Vassilvtiskii}{2018}]%
        {lykouris2018competitive}
\bibfield{author}{\bibinfo{person}{Thodoris Lykouris} {and}
  \bibinfo{person}{Sergei Vassilvtiskii}.} \bibinfo{year}{2018}\natexlab{}.
\newblock \showarticletitle{Competitive caching with machine learned advice}.
  In \bibinfo{booktitle}{\emph{International Conference on Machine Learning}}.
  PMLR, \bibinfo{pages}{3296--3305}.
\newblock


\bibitem[\protect\citeauthoryear{Mao, Alizadeh, Menache, and Kandula}{Mao
  et~al\mbox{.}}{2016}]%
        {mao2016resource}
\bibfield{author}{\bibinfo{person}{Hongzi Mao}, \bibinfo{person}{Mohammad
  Alizadeh}, \bibinfo{person}{Ishai Menache}, {and} \bibinfo{person}{Srikanth
  Kandula}.} \bibinfo{year}{2016}\natexlab{}.
\newblock \showarticletitle{Resource management with deep reinforcement
  learning}. In \bibinfo{booktitle}{\emph{Proceedings of the 15th ACM Workshop
  on Hot Topics in Networks}}. \bibinfo{pages}{50--56}.
\newblock


\bibitem[\protect\citeauthoryear{Marchetti-Spaccamela and
  Vercellis}{Marchetti-Spaccamela and Vercellis}{1995}]%
        {10.1007/BF01585758}
\bibfield{author}{\bibinfo{person}{A. Marchetti-Spaccamela} {and}
  \bibinfo{person}{C. Vercellis}.} \bibinfo{year}{1995}\natexlab{}.
\newblock \showarticletitle{Stochastic On-Line Knapsack Problems}.
\newblock \bibinfo{journal}{\emph{Math. Program.}} \bibinfo{volume}{68},
  \bibinfo{number}{1} (\bibinfo{date}{Jan.} \bibinfo{year}{1995}),
  \bibinfo{pages}{73–104}.
\newblock
\showISSN{0025-5610}
\urldef\tempurl%
\url{https://doi.org/10.1007/BF01585758}
\showDOI{\tempurl}


\bibitem[\protect\citeauthoryear{Medina and Vassilvitskii}{Medina and
  Vassilvitskii}{2017}]%
        {medina2017revenue}
\bibfield{author}{\bibinfo{person}{Andr{\'e}s~Munoz Medina} {and}
  \bibinfo{person}{Sergei Vassilvitskii}.} \bibinfo{year}{2017}\natexlab{}.
\newblock \showarticletitle{Revenue optimization with approximate bid
  predictions}.
\newblock \bibinfo{journal}{\emph{arXiv preprint arXiv:1706.04732}}
  (\bibinfo{year}{2017}).
\newblock


\bibitem[\protect\citeauthoryear{Mirza and Osindero}{Mirza and
  Osindero}{2014}]%
        {mirza2014conditional}
\bibfield{author}{\bibinfo{person}{Mehdi Mirza} {and} \bibinfo{person}{Simon
  Osindero}.} \bibinfo{year}{2014}\natexlab{}.
\newblock \showarticletitle{Conditional generative adversarial nets}.
\newblock \bibinfo{journal}{\emph{arXiv preprint arXiv:1411.1784}}
  (\bibinfo{year}{2014}).
\newblock


\bibitem[\protect\citeauthoryear{Nagarajan and Kolter}{Nagarajan and
  Kolter}{2017}]%
        {nagarajan2017gradient}
\bibfield{author}{\bibinfo{person}{Vaishnavh Nagarajan} {and}
  \bibinfo{person}{J~Zico Kolter}.} \bibinfo{year}{2017}\natexlab{}.
\newblock \showarticletitle{Gradient descent GAN optimization is locally
  stable}.
\newblock \bibinfo{journal}{\emph{arXiv preprint arXiv:1706.04156}}
  (\bibinfo{year}{2017}).
\newblock


\bibitem[\protect\citeauthoryear{Nie, Trullo, Lian, Petitjean, Ruan, Wang, and
  Shen}{Nie et~al\mbox{.}}{2017}]%
        {nie2017medical}
\bibfield{author}{\bibinfo{person}{Dong Nie}, \bibinfo{person}{Roger Trullo},
  \bibinfo{person}{Jun Lian}, \bibinfo{person}{Caroline Petitjean},
  \bibinfo{person}{Su Ruan}, \bibinfo{person}{Qian Wang}, {and}
  \bibinfo{person}{Dinggang Shen}.} \bibinfo{year}{2017}\natexlab{}.
\newblock \showarticletitle{Medical image synthesis with context-aware
  generative adversarial networks}. In \bibinfo{booktitle}{\emph{International
  Conference on Medical Image Computing and Computer-Assisted Intervention}}.
  Springer, \bibinfo{pages}{417--425}.
\newblock


\bibitem[\protect\citeauthoryear{Prodan and Nae}{Prodan and Nae}{2009}]%
        {prodan2009prediction}
\bibfield{author}{\bibinfo{person}{Radu Prodan} {and} \bibinfo{person}{Vlad
  Nae}.} \bibinfo{year}{2009}\natexlab{}.
\newblock \showarticletitle{Prediction-based real-time resource provisioning
  for massively multiplayer online games}.
\newblock \bibinfo{journal}{\emph{Future Generation Computer Systems}}
  \bibinfo{volume}{25}, \bibinfo{number}{7} (\bibinfo{year}{2009}),
  \bibinfo{pages}{785--793}.
\newblock


\bibitem[\protect\citeauthoryear{Silver, Schrittwieser, Simonyan, Antonoglou,
  Huang, Guez, Hubert, Baker, Lai, Bolton, et~al\mbox{.}}{Silver
  et~al\mbox{.}}{2017}]%
        {silver2017mastering}
\bibfield{author}{\bibinfo{person}{David Silver}, \bibinfo{person}{Julian
  Schrittwieser}, \bibinfo{person}{Karen Simonyan}, \bibinfo{person}{Ioannis
  Antonoglou}, \bibinfo{person}{Aja Huang}, \bibinfo{person}{Arthur Guez},
  \bibinfo{person}{Thomas Hubert}, \bibinfo{person}{Lucas Baker},
  \bibinfo{person}{Matthew Lai}, \bibinfo{person}{Adrian Bolton},
  {et~al\mbox{.}}} \bibinfo{year}{2017}\natexlab{}.
\newblock \showarticletitle{Mastering the game of go without human knowledge}.
\newblock \bibinfo{journal}{\emph{nature}} \bibinfo{volume}{550},
  \bibinfo{number}{7676} (\bibinfo{year}{2017}), \bibinfo{pages}{354--359}.
\newblock


\bibitem[\protect\citeauthoryear{Simon, Blume, et~al\mbox{.}}{Simon
  et~al\mbox{.}}{1994}]%
        {simon1994mathematics}
\bibfield{author}{\bibinfo{person}{Carl~P Simon}, \bibinfo{person}{Lawrence
  Blume}, {et~al\mbox{.}}} \bibinfo{year}{1994}\natexlab{}.
\newblock \bibinfo{booktitle}{\emph{Mathematics for economists}}.
  Vol.~\bibinfo{volume}{7}.
\newblock \bibinfo{publisher}{Norton New York}.
\newblock


\bibitem[\protect\citeauthoryear{Singh, Kearns, and Mansour}{Singh
  et~al\mbox{.}}{2013}]%
        {singh2013nash}
\bibfield{author}{\bibinfo{person}{Satinder Singh}, \bibinfo{person}{Michael
  Kearns}, {and} \bibinfo{person}{Yishay Mansour}.}
  \bibinfo{year}{2013}\natexlab{}.
\newblock \showarticletitle{Nash convergence of gradient dynamics in iterated
  general-sum games}.
\newblock \bibinfo{journal}{\emph{arXiv preprint arXiv:1301.3892}}
  (\bibinfo{year}{2013}).
\newblock


\bibitem[\protect\citeauthoryear{Sleator and Tarjan}{Sleator and
  Tarjan}{1985}]%
        {sleator1985amortized}
\bibfield{author}{\bibinfo{person}{Daniel~D Sleator} {and}
  \bibinfo{person}{Robert~E Tarjan}.} \bibinfo{year}{1985}\natexlab{}.
\newblock \showarticletitle{Amortized efficiency of list update and paging
  rules}.
\newblock \bibinfo{journal}{\emph{Commun. ACM}} \bibinfo{volume}{28},
  \bibinfo{number}{2} (\bibinfo{year}{1985}), \bibinfo{pages}{202--208}.
\newblock


\bibitem[\protect\citeauthoryear{Steinberger, Lerer, and Brown}{Steinberger
  et~al\mbox{.}}{2020}]%
        {steinberger2020dream}
\bibfield{author}{\bibinfo{person}{Eric Steinberger}, \bibinfo{person}{Adam
  Lerer}, {and} \bibinfo{person}{Noam Brown}.} \bibinfo{year}{2020}\natexlab{}.
\newblock \showarticletitle{DREAM: Deep regret minimization with advantage
  baselines and model-free learning}.
\newblock \bibinfo{journal}{\emph{arXiv preprint arXiv:2006.10410}}
  (\bibinfo{year}{2020}).
\newblock


\bibitem[\protect\citeauthoryear{Tesauro et~al\mbox{.}}{Tesauro
  et~al\mbox{.}}{2005}]%
        {tesauro2005online}
\bibfield{author}{\bibinfo{person}{Gerald Tesauro} {et~al\mbox{.}}}
  \bibinfo{year}{2005}\natexlab{}.
\newblock \showarticletitle{Online resource allocation using decompositional
  reinforcement learning}. In \bibinfo{booktitle}{\emph{AAAI}},
  Vol.~\bibinfo{volume}{5}. \bibinfo{pages}{886--891}.
\newblock


\bibitem[\protect\citeauthoryear{Vera, Banerjee, and Gurvich}{Vera
  et~al\mbox{.}}{2021}]%
        {vera2021online}
\bibfield{author}{\bibinfo{person}{Alberto Vera}, \bibinfo{person}{Siddhartha
  Banerjee}, {and} \bibinfo{person}{Itai Gurvich}.}
  \bibinfo{year}{2021}\natexlab{}.
\newblock \showarticletitle{Online allocation and pricing: Constant regret via
  bellman inequalities}.
\newblock \bibinfo{journal}{\emph{Operations Research}} (\bibinfo{year}{2021}).
\newblock


\bibitem[\protect\citeauthoryear{Wang, Shi, Yu, Wu, Singh, Joppa, and
  Fang}{Wang et~al\mbox{.}}{2019}]%
        {wang2019deep}
\bibfield{author}{\bibinfo{person}{Yufei Wang}, \bibinfo{person}{Zheyuan~Ryan
  Shi}, \bibinfo{person}{Lantao Yu}, \bibinfo{person}{Yi Wu},
  \bibinfo{person}{Rohit Singh}, \bibinfo{person}{Lucas Joppa}, {and}
  \bibinfo{person}{Fei Fang}.} \bibinfo{year}{2019}\natexlab{}.
\newblock \showarticletitle{Deep reinforcement learning for green security
  games with real-time information}. In \bibinfo{booktitle}{\emph{Proceedings
  of the AAAI Conference on Artificial Intelligence}},
  Vol.~\bibinfo{volume}{33}. \bibinfo{pages}{1401--1408}.
\newblock


\bibitem[\protect\citeauthoryear{Wang, Gwon, Oates, and Iezzi}{Wang
  et~al\mbox{.}}{2017}]%
        {wang2017automated}
\bibfield{author}{\bibinfo{person}{Zhiguang Wang}, \bibinfo{person}{Chul Gwon},
  \bibinfo{person}{Tim Oates}, {and} \bibinfo{person}{Adam Iezzi}.}
  \bibinfo{year}{2017}\natexlab{}.
\newblock \showarticletitle{Automated cloud provisioning on aws using deep
  reinforcement learning}.
\newblock \bibinfo{journal}{\emph{arXiv preprint arXiv:1709.04305}}
  (\bibinfo{year}{2017}).
\newblock


\bibitem[\protect\citeauthoryear{Yu, Zhang, Wang, and Yu}{Yu
  et~al\mbox{.}}{2017}]%
        {yu2017seqgan}
\bibfield{author}{\bibinfo{person}{Lantao Yu}, \bibinfo{person}{Weinan Zhang},
  \bibinfo{person}{Jun Wang}, {and} \bibinfo{person}{Yong Yu}.}
  \bibinfo{year}{2017}\natexlab{}.
\newblock \showarticletitle{Seqgan: Sequence generative adversarial nets with
  policy gradient}. In \bibinfo{booktitle}{\emph{Thirty-First AAAI Conference
  on Artificial Intelligence}}.
\newblock


\bibitem[\protect\citeauthoryear{Zhang, Gan, and Carin}{Zhang
  et~al\mbox{.}}{2016}]%
        {zhang2016generating}
\bibfield{author}{\bibinfo{person}{Yizhe Zhang}, \bibinfo{person}{Zhe Gan},
  {and} \bibinfo{person}{Lawrence Carin}.} \bibinfo{year}{2016}\natexlab{}.
\newblock \showarticletitle{Generating text via adversarial training}. In
  \bibinfo{booktitle}{\emph{NIPS workshop on Adversarial Training}},
  Vol.~\bibinfo{volume}{21}.
\newblock


\bibitem[\protect\citeauthoryear{Zhang, Li, and Wu}{Zhang
  et~al\mbox{.}}{2017}]%
        {zhang2017optimal}
\bibfield{author}{\bibinfo{person}{Zijun Zhang}, \bibinfo{person}{Zongpeng Li},
  {and} \bibinfo{person}{Chuan Wu}.} \bibinfo{year}{2017}\natexlab{}.
\newblock \showarticletitle{Optimal posted prices for online cloud resource
  allocation}.
\newblock \bibinfo{journal}{\emph{Proceedings of the ACM on Measurement and
  Analysis of Computing Systems}} \bibinfo{volume}{1}, \bibinfo{number}{1}
  (\bibinfo{year}{2017}), \bibinfo{pages}{23}.
\newblock


\bibitem[\protect\citeauthoryear{Zhou, Chakrabarty, and Lukose}{Zhou
  et~al\mbox{.}}{2008}]%
        {zhou2008budget}
\bibfield{author}{\bibinfo{person}{Yunhong Zhou}, \bibinfo{person}{Deeparnab
  Chakrabarty}, {and} \bibinfo{person}{Rajan Lukose}.}
  \bibinfo{year}{2008}\natexlab{}.
\newblock \showarticletitle{Budget constrained bidding in keyword auctions and
  online knapsack problems}. In \bibinfo{booktitle}{\emph{International
  Workshop on Internet and Network Economics}}. Springer,
  \bibinfo{pages}{566--576}.
\newblock


\end{thebibliography}


\end{document}